%% file: iclr2026_conference.tex
\documentclass{article} 
\usepackage{iclr2026_conference,times}

\input{math_commands.tex}

\usepackage{hyperref}
\usepackage{url}

\usepackage[utf8]{inputenc}                
\usepackage{pifont}              
\usepackage{booktabs}            
\usepackage{graphicx}
\usepackage{caption}
\usepackage{subcaption}
\usepackage{xcolor}
\usepackage{multirow}
\usepackage{wrapfig}

\usepackage{amsmath, amssymb, amsthm}

\newtheorem{proposition}{Proposition}
\newtheorem{remark}{Remark}

\usepackage{pifont}
\newcommand{\cmark}{\ding{51}}  
\newcommand{\xmark}{\ding{55}}  
\newcommand{\dmark}{\textbf{--}} %

\title{Uncertainty Estimation  \\ via Hyperspherical Confidence Mapping}

\author{Eunseo Choi \\
KAIST \\
\texttt{ces302@kaist.ac.kr} \\
\And
Ho-Yeon Kim \\
Samsung Electronic Co., Ltd \\
\texttt{hoyn.kim@samsung.com} \\
\And
Jaewon Lee \\
Samsung Electronic Co., Ltd \\
\texttt{jw0928.lee@samsung.com} \\
\And
Taeyong Jo \\
Samsung Electronic Co., Ltd \\
\texttt{ty.jo@samsung.com} \\
\And
Myungjun Lee\\
Samsung Electronic Co., Ltd\\
\texttt{myung01.lee@samsung.com}\\
\And
Heejin Ahn\\
KAIST\\
\texttt{heejin.ahn@kaist.ac.kr}
}

\iclrfinalcopy 
\begin{document}

\maketitle

\begin{abstract}
Quantifying uncertainty in neural network predictions is essential for high-stakes domains such as autonomous driving, healthcare, and manufacturing.  While existing approaches often depend on costly sampling or restrictive distributional assumptions, we propose \textbf{Hyperspherical Confidence Mapping (HCM)}, a simple yet principled framework for sampling-free and distribution-free uncertainty estimation. HCM decomposes outputs into a magnitude and a normalized direction vector constrained to lie on the unit hypersphere, enabling a novel interpretation of uncertainty as the degree of violation of this geometric constraint. This yields deterministic and interpretable estimates applicable to both regression and classification. Experiments across diverse benchmarks and real-world industrial tasks demonstrate that HCM matches or surpasses ensemble and evidential approaches, with far lower inference cost and stronger confidence–error alignment. Our results highlight the power of geometric structure in uncertainty estimation and position HCM as a versatile alternative to conventional techniques.
\end{abstract}


\section{Introduction}

Neural networks have achieved remarkable success across a wide range of domains such as computer vision, natural language processing, and scientific modeling. 
As these models are increasingly deployed in high-stakes applications, including autonomous driving, 
medical diagnosis, and semiconductor manufacturing, the reliability of their predictions becomes critical~\citep{kendall2017uncertainties, lakshminarayanan2017simple, esteva2019guide}. 
In safety-critical settings, erroneous predictions can lead to severe real-world consequences. 
Thus, quantifying the uncertainty associated with neural network outputs is not merely desirable 
but essential for trustworthy AI~\citep{gneiting2007probabilistic, guo2017calibration}.

A widely adopted strategy for uncertainty estimation is the use of \emph{sampling-based} methods, such as Monte Carlo (MC) dropout~\citep{gal2016dropout} and Deep Ensembles~\citep{lakshminarayanan2017simple}. 
These methods approximate predictive uncertainty by performing multiple stochastic forward passes or aggregating outputs from several independently trained models. 
While often effective, they incur substantial computational and memory overhead, limiting their practicality in real-time or resource-constrained scenarios~\citep{ahmed2024tiny}.

To avoid repeated sampling, \emph{distribution-based} methods directly model predictive uncertainty through parametric families such as Gaussian regression~\citep{kendall2017uncertainties}, Dirichlet-based classification~\citep{malinin2018predictive}, and evidential learning~\citep{amini2020deep}. 
These approaches provide deterministic inference but rely on strong distributional priors (e.g., Gaussian or Dirichlet) that may fail to capture multimodal or complex forms of uncertainty.

In contrast, \emph{interval-based} methods construct prediction intervals that contain the true value with a prescribed probability. 
Quantile Regression~\citep{koenker2005quantile} and Lower Upper Bound Estimation~\citep{khosravi2011comprehensive} estimate interval bounds without explicit distributional assumptions, while conformal prediction~\citep{vovk2005algorithmic,shafer2008tutorial} ensures statistically valid coverage for arbitrary models. 
However, these methods often require multiple quantile outputs, carefully designed objectives, and typically guarantee only marginal rather than per-sample coverage.

Finally, \emph{similarity-based} methods estimate uncertainty from feature-space similarity or density. Examples include distance-based approaches~\citep{van2020uncertainty, mukhoti2023deep}, energy-based approaches~\citep{liu2020energy}, and recent similarity-driven uncertainty methods proposed in text classification~\citep{gong2022confidence, kotelevskii2022nonparametric, vazhentsev2023hybrid}. Although effective for classification, these approaches rely on class prototypes or density estimation and therefore do not extend naturally to regression.

In this work, we introduce a new uncertainty estimation framework called 
\textbf{Hyperspherical Confidence Mapping (HCM)}. Our method avoids both 
sampling and distributional assumptions while maintaining high 
interpretability, computational efficiency, and task-agnostic applicability. 
HCM decomposes the model output into two components: a magnitude term $R$, 
defined as the output norm, and a direction vector $d$, obtained by 
normalizing with $R$ so that $\|d\|=1$. This decomposition is not a mere
heuristic, but a reformulation of the original 
unconstrained prediction problem into a constrained optimization problem under 
the unit-norm constraint on $d$.

We interpret the violation of this hyperspherical constraint --- deviations from the unit norm --- as a measure of predictive uncertainty. This geometric interpretation naturally integrates into end-to-end training, enabling a confidence representation grounded in constraint satisfaction. By directly exploiting this formulation, HCM provides deterministic and interpretable uncertainty estimates without relying on distributional priors and repeated sampling. As summarized in Table~\ref{tab:uq-comparison}, HCM uniquely combines sampling-free, distribution-free, task-agnostic, and real-time properties in a single framework, making it applicable to both regression and classification and scalable to real-world deployment. To facilitate reproducibility, we make the source code for our CIFAR-10, Two-Moons, and 1D regression experiments publicly available at https://github.com/Abandoned-Puppy/HCM.

\paragraph{Our contributions.}
\begin{itemize}
\item We propose \textbf{Hyperspherical Confidence Mapping (HCM)}, a novel uncertainty estimation framework that is sampling-free, distribution-free, and task-agnostic.
\item We establish theoretical foundations by reformulating prediction as a constrained optimization problem and interpreting constraint violations as uncertainty, which is provably related to prediction error.
\item We validate HCM extensively on classification and  regression datasets, demonstrating superior calibration, confidence–error alignment, and competitive out-of-distribution detection compared to existing baselines.
\item Beyond benchmarks, we showcase the applicability of HCM in \emph{real-world semiconductor manufacturing tasks}, highlighting its scalability and practical impact.
\end{itemize}

\begin{table}[t]
\caption{Comparison of uncertainty estimation methods}
\label{tab:uq-comparison}
\begin{center}
\begin{tabular}{lccccc}
\multicolumn{1}{c}{\bf Method} & 
\multicolumn{1}{c}{\bf Sampling-free} & 
\multicolumn{1}{c}{\bf Dist-free} & 
\multicolumn{1}{c}{\bf Real-time} & 
\multicolumn{1}{c}{\bf Task-agnostic} & 
\multicolumn{1}{c}{\bf Interpretable}
\\ \hline \\
Sampling              & \xmark & \cmark & \xmark & \cmark & \xmark \\
Distribution                  & \cmark & \xmark & \cmark & \cmark & \dmark \\
Interval              & \cmark & \xmark & \cmark & \cmark & \dmark \\
Similarity        & \cmark & \cmark & \cmark & \xmark & \cmark \\
\textbf{HCM (ours)}         & \cmark & \cmark & \cmark & \cmark & \cmark \\
\end{tabular}
\end{center}
\end{table}

\section{Method}
\label{sec:method}
In this section, we present the details of our proposed method and demonstrate that our geometric formulation provides reliable and interpretable estimates of prediction error.

\subsection{Problem Formulation}

We denote the input by $x \in \mathcal{X} \subseteq \mathbb{R}^{D^*}$ and the corresponding target by $y \in \mathbb{R}^D$. 
For classification with $C$ classes, we set $D = C$, representing the target as a one-hot vector in $\mathbb{R}^D$. 
The network output $f_\theta(x) \in \mathbb{R}^D$ is trained to regress toward this target, and the predicted class is obtained via $\arg\max f_\theta(x)$. 
Thus both regression and classification are treated within the same regression framework in $\mathbb{R}^D$. 
The goal is not only to produce accurate predictions $\hat{y} = f_\theta(x)$, but also to quantify the associated uncertainty in a deterministic and efficient way.


\subsection{Hyperspherical Decomposition}

We propose \textbf{Hyperspherical Decomposition}, a novel approach that separates a model's output into a scaler \textbf{magnitude} $R$ and a \textbf{unit-norm direction vector} $d$. Given a target \( y \in \mathbb{R}^D \), we reformulate it as $y = Rd$, where $R \in \mathbb{R}^+$, $d \in \mathbb{R}^D$, and $\|d\|_2=1$. Our model is trained to predict these two components, $\hat{R}$ and $\hat{d}$, to form the final prediction $\hat{y} = \hat{R}\hat{d}$. This decomposition is not a mere heuristic but a principled way to impose a geometric constraint that is both \textbf{unbiased} and \textbf{prior-free}. While other constraints can break the inherent symmetry between output dimensions and inject directional biases, our hyperspherical decomposition ensures that all dimensions are treated equally, leading to a geometrically consistent representation. For scalar regression ($D=1$), we embed the target into a minimal $D=2$ space via duplication, $y_{\mathrm{exp}}=(y,y)$, enabling the same decomposition without modifying the core formulation.

This formulation is distinct from prior hyperspherical approaches for text calibration,
such as \citep{gong2022confidence}, which apply a hyperspherical constraint only to the
classification logits. In contrast, we decompose the \emph{target} itself into a magnitude–direction pair
and derive uncertainty from violations of this geometric constraint, enabling a unified treatment of both
regression and classification.

\subsection{Uncertainty and Constraint Violation}
\label{sec:uncertainty-violation}

In machine learning, prediction problems are typically formulated as an unconstrained optimization problem. With our decomposition, we reformulate it as a constrained optimization problem 
\begin{equation}\label{eq:primal}
\min_{\hat{R}, \hat{d}} \; \mathcal{L}(\hat{R} \hat{d}, y) \quad \text{subject to } \|\hat{d}\|_2 = 1.
\end{equation}
This is the \emph{primal} formulation, where the direction vector $\hat{d}$ is required to lie on the unit hypersphere. In practice, we do not enforce this condition as a hard constraint. Since the ground-truth $d$ satisfies $\|d\|_2 = 1$, the training objective inherently drives $\hat{d}$ toward the unit norm as it learns to approximate $d$. 
Thus, $\hat{d}$ may still deviate slightly from the hyperspherical surface, and such deviations serve as informative signals of prediction reliability. We define the uncertainty score as
\begin{equation}
u(x) := \hat{R}(x)\big| \|\hat{d}(x)\|_2 - 1 \big|, 
\end{equation}
which directly measures the degree of constraint violation in the output direction.
This score is computed deterministically from the model output without requiring sampling, labels, or auxiliary networks, making it efficient and lightweight.

\paragraph{Notation.}
Throughout this section, let the ground truth be \(y = R  d\) with \(\|d\|_2 = 1\), and the prediction be \(\hat{y} = \hat{R} \hat{d}\).
Define the errors \(e_y := \hat{y} - y\), \(e_d := \hat{d} - d\), and \(e_R := \hat{R} - R\). For simplicity, we write $\hat{R}$ and $\hat{d}$ when the input $x$ is clear from context, and use $\hat{R}(x)$ and $\hat{d}(x)$ when we need to make the dependence explicit.

\subsection{Training Objective}
\label{sec:training-obj}

We minimize the total loss
\begin{equation}\label{eq:loss}
\mathcal{L}_{\text{total}}
= \;\phi_d\!\big(R\,\|e_d\|_2\big)
\;+\; \phi_R\!\big(e_R\big)
\;+\; \lambda_{\text{norm}}\;\phi_{\text{norm}}\!\big(\,\big|\|\hat d\|_2 - 1\big|\,\big).
\end{equation}
The first two terms supervise the direction and magnitude, respectively, 
while the last term softly enforces the unit-norm constraint with weight $\lambda_{\text{norm}}$. 
Here, each $\phi_\star(\cdot)$ denotes a member of the same loss family (e.g., power-$p$, Huber, or smooth-$\ell_1$), 
providing a unified formulation that generalizes beyond the squared loss. 
This design allows the loss to flexibly adapt its curvature or robustness while maintaining consistent treatment 
across the direction, magnitude, and norm components.

\paragraph{Connection with the primal problem.}
This objective can be viewed as a soft relaxation of the constrained formulation in (\ref{eq:primal}), 
with the norm penalty acting as the regularizer.  
Compared to the exact expansion of (\ref{eq:primal}), which is given in Appendix~\ref{app:training-obj}, it simplifies the coupling between the magnitude $\hat{R}$ and the direction $\hat{d}$, resulting in more stable optimization.

\subsection{Theoretical Motivation}

The hyperspherical decomposition induces a geometric consistency constraint 
$\|\hat d(x)\|_2 \approx 1$, and violations of this constraint provide information about how reliably the model predicts a given input. 
Rather than attempting to explicitly disentangle epistemic and aleatoric uncertainty, 
we interpret the uncertainty score
\[
u(x) := \hat R(x)\,\big|\|\hat d(x)\|_2 - 1\big|
\]
as a deterministic indicator of prediction reliability whose behavior reflects both data scarcity and intrinsic noise.

\paragraph{Lower bound on prediction error.}
We first establish a basic relationship between $u(x)$ and the prediction error.

\begin{proposition}[Lower bound on prediction error]\label{prop:lower bound}
Let $\epsilon := \tfrac{|e_R|}{\hat{R}\|e_d\|_2}$ with $\hat{R}\|e_d\|_2 > 0$. Then
\begin{equation}
\|e_y\|_2 \ge u(x)\,(1-\epsilon).
\end{equation}
When $\epsilon \ll 1$, the score $u(x)$ acts as a prediction-dependent surrogate for the true error.
\end{proposition}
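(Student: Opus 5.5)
We decompose the prediction error into a magnitude part and a direction part, and then control each with the triangle inequality.

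\textbf{Step 1 (decomposition).} Since $y = Rd$ and $\hat y = \hat R\hat d$, we write
\[
e_y = \hat R\hat d - Rd = \hat R(\hat d - d) + (\hat R - R)d = \hat R\, e_d + e_R\, d .
\]

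\textbf{Step 2 (reverse triangle inequality).} Using $\|d\|_2 = 1$ and $\hat R > 0$, the reverse triangle inequality gives
\[
\|e_y\|_2 \ge \hat R\|e_d\|_2 - |e_R| = \hat R\|e_d\|_2\,(1-\epsilon),
\]
where $\epsilon$ is exactly the ratio defined in the statement. The hypothesis $\hat R\|e_d\|_2>0$ is what makes this factorization legitimate.

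\textbf{Step 3 (relating $\|e_d\|_2$ to the norm violation).} Because $d$ lies on the unit sphere, the reverse triangle inequality applied to $\hat d = d + e_d$ gives
\[
\|e_d\|_2 = \|\hat d - d\|_2 \ge \big|\|\hat d\|_2 - \|d\|_2\big| = \big|\|\hat d\|_2 - 1\big|.
\]
Multiplying by $\hat R$ yields $\hat R\|e_d\|_2 \ge u(x)$.

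\textbf{Step 4 (combining).} If $1-\epsilon \ge 0$, we multiply the inequality of Step 3 by $1-\epsilon$ and chain it with Step 2 to obtain $\|e_y\|_2 \ge u(x)(1-\epsilon)$. If $\epsilon > 1$, the right-hand side $u(x)(1-\epsilon)$ is nonpositive, since $u(x)\ge 0$, so the bound holds trivially because $\|e_y\|_2 \ge 0$.

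The main point needing care is this sign issue in Step 4: the direction of the chained inequality depends on whether $1-\epsilon$ is nonnegative, so the case split above is necessary. Everything else is two applications of the triangle inequality. For the closing remark in the statement, when $\epsilon \ll 1$ the bound reads $\|e_y\|_2 \gtrsim u(x)$. Together with the upper estimate $\|e_y\|_2 \le \hat R\|e_d\|_2 + |e_R|$, this shows $u(x)$ tracks the direction-driven part of the error.
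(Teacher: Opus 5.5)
Your proof is correct and follows the same route as the paper. Both use the decomposition $e_y = \hat R e_d + e_R d$, the reverse triangle inequality to get $\|e_y\|_2 \ge \hat R\|e_d\|_2 - |e_R|$, and then $\hat R\|e_d\|_2 \ge \hat R\,\bigl|\|\hat d\|_2 - 1\bigr| = u(x)$. Your explicit case split on the sign of $1-\epsilon$ fills in a step the paper leaves implicit when it writes ``the claim follows''; it relies on $u(x)\ge 0$, which holds because the hypothesis $\hat R\|e_d\|_2>0$ forces $\hat R>0$.
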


\begin{proof}
Since $e_y=\hat R e_d + e_R d$, the triangle and reverse-triangle inequalities give
\[
\hat R\|e_d\|_2 - |e_R|
\;\le\; \|e_y\|_2
\;\le\; \hat R\|e_d\|_2 + |e_R|.
\]
Because $\hat R\|e_d\|_2 \ge \hat R|\|\hat d\|_2-\|d\|_2| = u(x)$, the claim follows.
\end{proof}

\begin{remark}
In typical trained models, the term $|e_R|$ is substantially smaller than 
$\hat R\|e_d\|_2$, so the quantity $u(x)$ serves as a reliable indicator of large
prediction error: when $u(x)$ is large, the error $\|e_y\|_2$ must also be large.
Thus, $u(x)$ acts as a deterministic lower-bound indicator of prediction error,
providing a scale-consistent measure of prediction reliability.
\end{remark}



This direct and monotonic relationship between $u(x)$ and the minimum achievable error makes the score inherently interpretable: its magnitude has a clear meaning—larger values certify that accurate prediction is mathematically unattainable.

\paragraph{Scale-aware variability under additive noise.}
To examine how the geometric structure responds to input-dependent noise, 
we also consider a variance-like quantity
\[
\hat\sigma^2(x) := \tfrac{1}{D-1}\,u(x)\,\Big(\hat R(x)(1+\|\hat d(x)\|_2)\Big),
\]
which combines the lower-bound factor $u(x)$ with an approximate upper-bound factor derived from the predicted magnitude.
Although $\hat\sigma(x)$ is not a probabilistic variance, it increases predictably with noise level and provides a deterministic measure of aleatoric variability.

\begin{proposition}[Behavior under Gaussian noise]\label{prop:single-pass}
If $y=g(x)+\xi$ with zero-mean Gaussian noise of variance $\sigma^2$, then
\[
\hat\sigma^2(x) = \sigma^2 +O\bigg(\frac{(D+2)(D+4)\sigma^4}{(D-1)\|g(x)\|_2^2}\bigg).
\]
\end{proposition}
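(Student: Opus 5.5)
The plan is to interpret the proposition at the population optimum of the training objective and then expand in the small parameter $\sigma/\|g(x)\|_2$. Under squared-type losses the network, at a fixed input $x$, learns conditional expectations. For the magnitude head this gives $\hat R(x)\approx \mathbb{E}\|y\|_2$. For the direction head, $\hat d(x)$ should approximate $\mathbb{E}[y/\|y\|_2]$. That expectation is an average of unit vectors, so its norm is strictly less than one.

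The deviation $1-\|\hat d\|_2$ is therefore the mean resultant length deficit of the noisy direction, and this is where $u(x)$ acquires its noise dependence. We then have
\[
u(x)\,\hat R(1+\|\hat d\|_2)=\hat R^2\,(1-\|\hat d\|_2^2),
\]
and the task reduces to computing $\mathbb{E}\|y\|_2$ and $\|\mathbb{E}[y/\|y\|_2]\|_2^2$ to the required order.

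The key steps are as follows.

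\begin{enumerate}
\item Write $y=g+\xi$, with $g=g(x)$, $r=\|g\|_2$ and $\xi\sim\mathcal N(0,\sigma^2 I_D)$. Decompose $\xi=\xi_\parallel\, g/r+\xi_\perp$, with $\xi_\parallel$ scalar and $\xi_\perp$ lying in the $(D-1)$-dimensional orthogonal complement.

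\item Taylor expand $\|y\|_2=\sqrt{(r+\xi_\parallel)^2+\|\xi_\perp\|^2}$ and
\[
\frac{y}{\|y\|_2}=\frac{(r+\xi_\parallel)\,g/r+\xi_\perp}{\|y\|_2}
\]
in powers of $\xi/r$. Use the Gaussian moments $\mathbb{E}\|\xi_\perp\|^2=(D-1)\sigma^2$ and $\mathbb{E}\|\xi_\perp\|^4=(D-1)(D+1)\sigma^4$, together with the full-vector analogues that produce the factors $(D+2)(D+4)$ via $\mathbb{E}\|\xi\|^{2k}$.

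\item At leading order, $\mathbb{E}[y/\|y\|_2]\approx \frac{g}{r}\bigl(1-\frac{(D-1)\sigma^2}{2r^2}\bigr)$. Hence $1-\|\hat d\|_2^2\approx (D-1)\sigma^2/r^2$, while $\hat R^2\approx r^2\bigl(1+O(D\sigma^2/r^2)\bigr)$.

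\item Multiplying and dividing by $D-1$ gives $\hat\sigma^2=\sigma^2+O(\sigma^4/r^2)$.

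\item Collect the second-order terms in the remainder. They come from the fourth moments of the expansion, which scale as $(D+2)(D+4)\sigma^4/r^2$, and after division by $D-1$ they give the stated error term.
\end{enumerate}

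For $D=1$, the duplication embedding in the paper puts the target in $D=2$, so the formula is applied with the effective dimension. This is a bookkeeping point to mention rather than a separate argument.

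The main obstacle is making the expansion rigorous and controlling the remainder. The function $y\mapsto y/\|y\|_2$ is singular at $y=0$, and Gaussian noise reaches that region with small but nonzero probability. I would split the expectation on the event $\|\xi\|\le r/2$. On that event a third- or fourth-order Taylor remainder is bounded by a constant times $\|\xi\|^4/r^4$, whose expectation is $\propto D(D+2)\sigma^4/r^4$; moments of the radial part also produce $(D+2)(D+4)$. On the complementary event the integrands are bounded: $\|y/\|y\|_2\|=1$, and $\|y\|_2\le r+\|\xi\|$. The Gaussian tail $\Pr(\|\xi\|>r/2)$ is then exponentially small, so it is absorbed into the $O(\cdot)$ term.

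A secondary caveat is that the identification of $\hat R,\hat d$ with these conditional means depends on the loss. I will state it as an assumption that the squared loss is at its population minimizer and that $\lambda_{\text{norm}}$ is small enough not to distort $\hat d$. Under this assumption the proposition is a statement about the idealized optimum.
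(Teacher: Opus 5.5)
Your proposal is the paper's proof: take $\hat R,\hat d$ at the population minimizers, expand $\mathbb{E}[y/\|y\|_2]$ in $\sigma/\|g(x)\|_2$ to get $1-\|\hat d\|_2^2=(D-1)\sigma^2/\|g(x)\|_2^2+O(\cdot)$, multiply by $\hat R^2=\|g(x)\|_2^2+O(D\sigma^2)$, and divide by $D-1$. Your changes are refinements, not a different route: you use the actual squared-loss minimizer $\hat R=\mathbb{E}\|y\|_2$ instead of the paper's $\hat R^2=\mathbb{E}\|y\|_2^2$ (this only affects the remainder), you get collinearity of $\mathbb{E}[y/\|y\|_2]$ with $g(x)$ from the symmetry $\xi_\perp\mapsto-\xi_\perp$ rather than the vMF interpretation, and you add a truncation step the paper omits. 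Drop your $D=1$ aside, though: duplication $(y,y)$ gives the degenerate noise $(\xi,\xi)$ with $\xi_\perp\equiv 0$, so the target direction is just $\pm(1,1)/\sqrt{2}$, the deficit $1-\|\hat d\|_2^2$ is governed by the sign-flip probability rather than by $\sigma^2/g(x)^2$, and the isotropic-noise computation does not transfer to that case.
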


The proof, given in Appendix~\ref{app:prop2}, shows that $\hat\sigma(x)$ tracks the noise level up to higher-order terms. 
Overall, the uncertainty score $u(x)$ and its derived quantities provide a unified, deterministic description of prediction reliability grounded in the geometry of the output space, 
without requiring explicit epistemic–aleatoric separation.

\subsection{Experimental Validation}
\begin{wrapfigure}{r}
{0.49\textwidth}
    \centering
    \vspace{-0.4cm}
\includegraphics[width=\linewidth, clip]{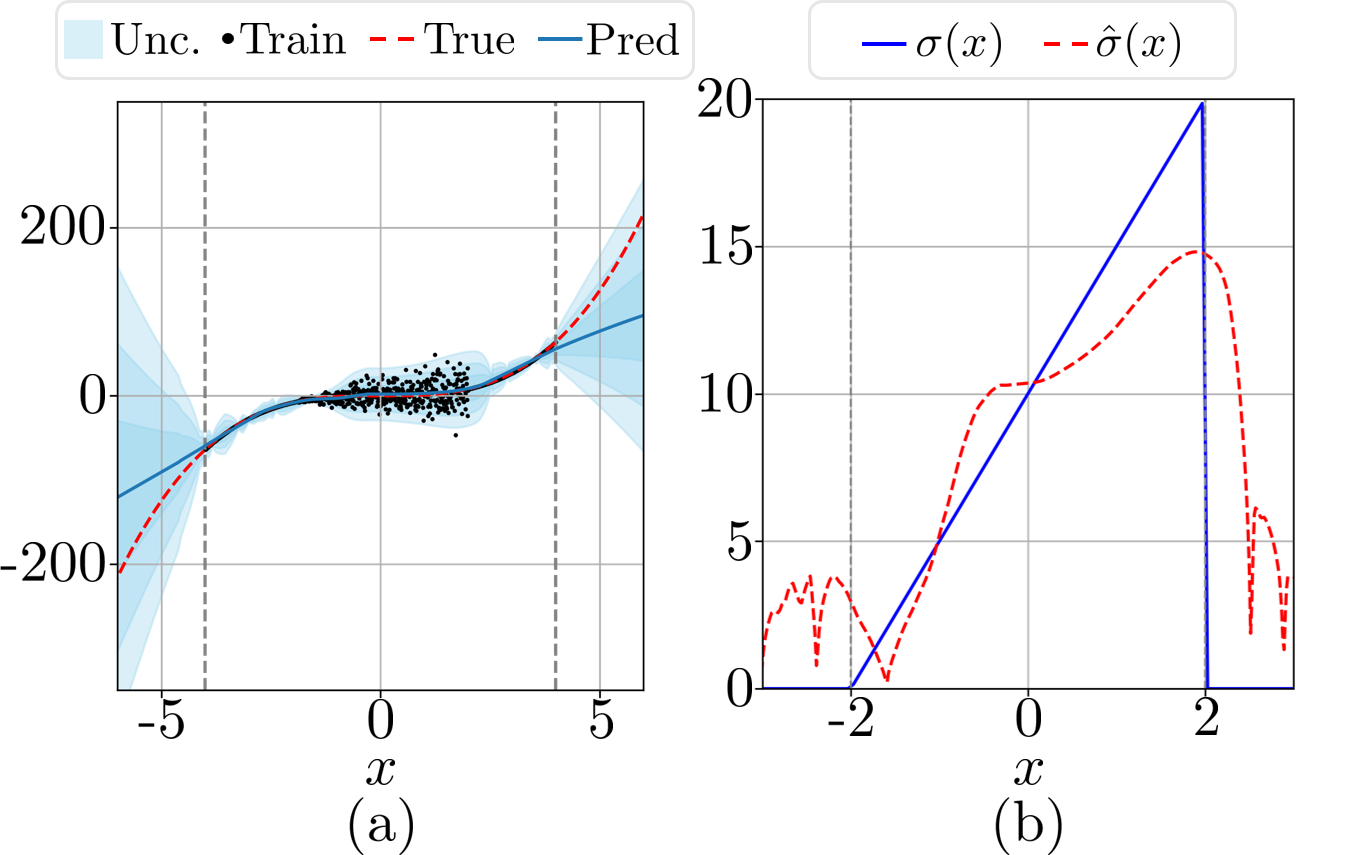}
    \caption{
(a) Aleatoric uncertainty is captured in the noisy region ($x \in [-2,2]$), while epistemic uncertainty appears outside the training domain ($x \notin [-4,4]$).
(b) $\hat\sigma(x)$ closely tracks the ground-truth $\sigma(x)$.}
    \label{fig:Toy}
    \vspace{-0.4cm}
\end{wrapfigure}

We empirically validate the proposed uncertainty estimator on a synthetic regression task designed to separately capture 
\emph{epistemic} and \emph{aleatoric} effects. The ground-truth function is $y = x^3$, with training samples drawn 
from $x \in [-4,4]$. Aleatoric uncertainty is induced by adding zero-mean Gaussian noise exclusively within the interval $x \in [-2,2]$, where the variance increases piecewise-linearly from $\sigma(-2)=0$ to $\sigma(2)=20$, as shown by the blue line in Figure~\ref{fig:Toy}(b). Outside this interval, the data are noise-free. 
Epistemic uncertainty is evaluated by probing predictions beyond the training domain, i.e., $x \notin [-4,4]$.

\paragraph{Results.}
Figure~\ref{fig:Toy}(a) shows regression results with predictive confidence bands ($\pm k\hat\sigma(x),  k=1,2,3$). Beyond the training domain, the bands widen, reflecting increasing epistemic uncertainty. Within the noisy region $x \in [-2,2]$, the bands align with the scattered observations, indicating that $\hat{\sigma}(x)$ captures the aleatoric uncertainty. Figure~\ref{fig:Toy}(b) further compares the ground-truth $\sigma(x)$ (blue solid) with the estimated $\hat\sigma(x)$ (red dashed), confirming that HCM reliably captures the true variance.

\subsection{Thresholding and Practical Use of the Uncertainty Score.}

While the uncertainty score $u(x)$ provides a quantitative indicator of 
model reliability, using it for decision-making requires a principled way 
to determine what constitutes ``high uncertainty.'' In practice, an 
uncertainty score alone is insufficient; users also need a task-dependent criterion that defines when a prediction should be regarded as unreliable. We therefore outline two practical strategies for selecting a threshold on $u(x)$, depending on the structure and requirements of the downstream task.
\paragraph{Tasks with an explicit accuracy or safety tolerance.}
In many real-world applications---particularly industrial or safety-critical 
settings---there exists a domain-specific tolerance $\varepsilon$ beyond which prediction errors are considered unacceptable. Since the uncertainty score $u(x)$ provides a lower bound on the prediction error, any input satisfying $u(x) > \varepsilon$ can be interpreted as violating this tolerance. In such cases, the threshold is not arbitrary but is directly determined by the operational requirements of the task.
\paragraph{Tasks without an explicit tolerance.}
When no predefined error or safety margin is available, a distributional 
calibration approach can be applied. A simple and effective guideline is to compute the empirical distribution of $u(x)$ on a validation set and select a high quantile (e.g., 95\% or 99\%) as the threshold. This identifies the upper tail of uncertainty values and marks predictions whose uncertainty is unusually large compared to typical in-distribution behavior.

\section{Experiments}
\label{sec:experiments}
We conduct a series of experiments to evaluate the effectiveness and generality of our proposed HCM-based uncertainty estimation framework. Our experimental goals are threefold:
\begin{itemize}
    \item \textbf{Conceptual Validation:} Provide intuitive evidence of how the HCM uncertainty score behaves in relation to decision boundaries.
    \item \textbf{Quantitative Benchmarking:} Establish competitive performance against representative baselines on classification and regression benchmarks.
    \item \textbf{Practical Applicability:} Assess whether HCM scales to high-dimensional, noisy, and safety-critical industrial data.
\end{itemize}
We organize the experiments in order of increasing complexity. 
We begin with classification tasks (Section~\ref{sec:twomoons}) to provide intuition about the behavior of the uncertainty score. 
We then move to large-scale classification benchmarks (Section~\ref{sec:cifar}) to test OOD detection performance. 
Next, we examine calibration for regression benchmarks (Section~\ref{sec:input-noise}) and evaluate high-dimensional industrial regression (Section~\ref{sec:chip}) to demonstrate the practical impact.
Detailed experimental settings are provided in Appendix~\ref{app:setup}.

\subsection{Two Moons}
\label{sec:twomoons}
We visually examine how the proposed uncertainty score behaves across the input space, particularly in relation to decision boundaries. 
For this purpose, we use the two moons binary classification dataset~\citep{pedregosa2011scikit} and train a HCM classifier with one hidden layer (16 ReLU units). 
This experiment serves as an intuitive validation of the geometric structure imposed by HCM.

\begin{figure}[t]
    \centering
    \includegraphics[width=0.9\linewidth, clip]{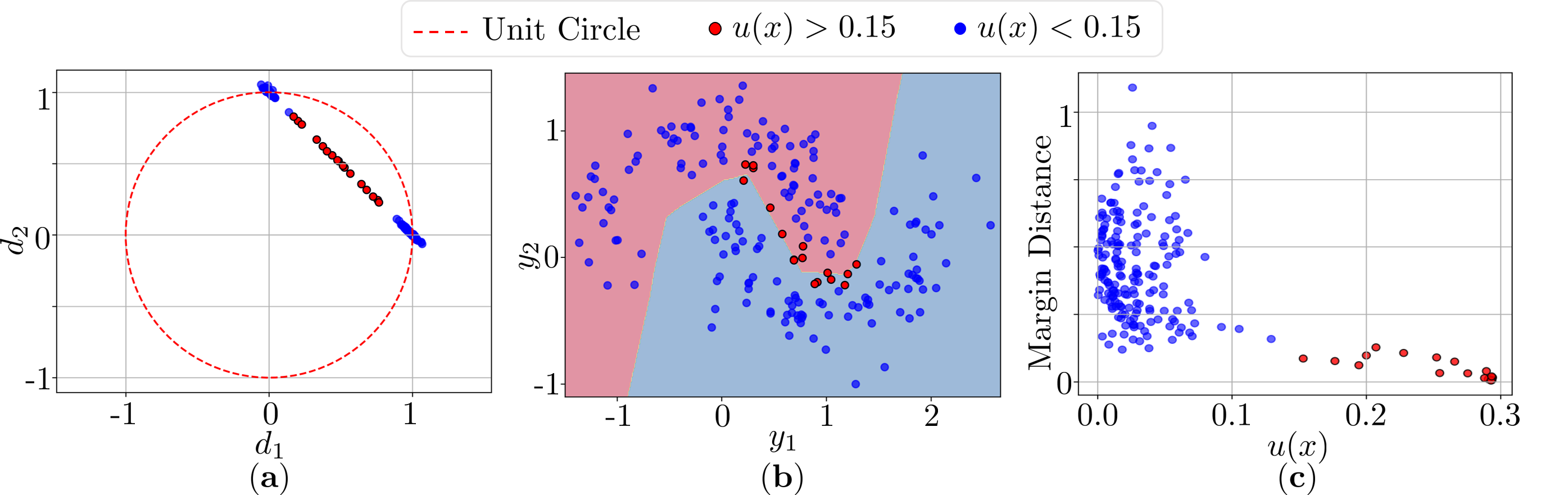}
    \caption{Two-moons experiment. 
(a) Representation of $\hat{d}$ on the unit circle, with ambiguous samples ($u(x)>0.15$) shown in red and others in blue.
(b) The same samples projected back into the input space.
(c) Scatter plot of $u(x)$ against distance to the decision boundary.}
    \label{fig:two_moons}
\end{figure}

\paragraph{Results.}
Figure~\ref{fig:two_moons} illustrates the learned representation from three perspectives. 
(a) We plot $\hat d$ where samples exhibiting high deviation ($u(x) > 0.15$) are highlighted in red. They concentrate along the line connecting the two class directions, $(1,0)$ and $(0,1)$, where $\|\hat d\|_2$ naturally decreases.
(b) Projecting these samples back to the input space reveals that they concentrate near the decision boundary, where the two classes overlap.  
(c) The scatter plot of $u(x)$ versus distance to the boundary confirms this trend: samples with large $u(x)$ lie close to the decision boundary, while those with small $u(x)$ are located farther away.
Taken together, these observations demonstrate that the proposed uncertainty score $u(x)$ serves both as a quantitative measure of prediction confidence and as a qualitative marker of inherently ambiguous regions. 

\subsection{Classification-Based OOD Detection on CIFAR-10}
\label{sec:cifar}

We now turn to the more challenging setting of multi-class classification to evaluate whether HCM can reliably capture uncertainty on large-scale benchmarks. In particular, we assess HCM for OOD detection under the standardized OpenOOD protocol~\citep{zhang2023openood} and compare its performance against a broad range of established OOD detection methods.


We train ResNet-18~\citep{he2016deep} on CIFAR-10~\citep{krizhevsky2009learning} as the in-distribution dataset.
To evaluate OOD detection, we consider both near-OOD (semantically related classes) and far-OOD (different modalities) benchmarks.
A full list of datasets and references is provided in Appendix~\ref{app:setup}.

We compare against a diverse set of representative baselines, including sampling-based approaches (Ensembles~\citep{lakshminarayanan2017simple} and MC Dropout~\citep{gal2016dropout});
softmax-based approaches including MSP~\citep{hendrycks2016baseline} and ODIN~\citep{liang2017enhancing}, which derive OOD scores from softmax logits;
and similarity-based approaches such as Energy~\citep{liu2020energy}, Mahalanobis Distance (MDS)~\citep{lee2018simple}, and KNN~\citep{sun2022out}.
We also evaluate recent similarity-based variants, including ViM~\citep{wang2022vim}, fDBD~\citep{liu2023fast}, and NCI~\citep{liu2025detecting}, which refine feature-space residuals or density estimation for improved scalability.

\paragraph{Results.}

Table~\ref{tab:auroc-transposed} reports AUROC on the near-OOD and far-OOD datasets. In these experiments, we introduce \textbf{HCM mix}, which augments training with interpolated cross-class samples using the mixup technique~\citep{He2019Bag}. Vanilla HCM, trained with one-hot supervision, tends to push predictions too strongly toward single class directions, limiting its ability to express uncertainty. Mix-up alleviates this by generating intermediate samples with interpolated labels, which, under the hyperspherical decomposition, correspond to directions lying between class anchors on the unit sphere. These intermediate directions naturally yield its magnitudes below one, enabling HCM to represent uncertainty more faithfully. 
This hypersphere-based interpretation explains why mix-up provides a meaningful boost for HCM but not for conventional methods. In fact, as shown in Appendix~\ref{app:mixup}, applying mixup to existing baselines does not consistently improve OOD detection and can even degrade performance. In contrast, HCM mix achieves an average AUROC of \textbf{89.44\%}, competitive with KNN (90.97\%) and NCI (90.36\%), while attaining the lowest computational latency among all methods. Full per-dataset AUROC, FPR@95TPR, and efficiency results are provided in Appendix~\ref{app:FPR}.

\begin{table}[t]
\centering
\caption{AUROC (\%) for near- and far-OOD detection across methods. 
Higher is better. Results are averaged over 5 random seeds.}
\label{tab:auroc-transposed}
\resizebox{\textwidth}{!}{
\begin{tabular}{lcccccccccc|cc}
\toprule
\textbf{OOD Type} & MSP & Ensembles & MC & ODIN & Energy & MDS & KNN & ViM & fDBD & NCI & HCM & HCM mix \\
\midrule
Near OOD & 86.73 & \textbf{88.89} & 85.21 & 85.49 & 87.52 & 84.91 & 88.07 & 84.20 & 88.87 & 86.49 & 82.23 & 87.90 \\
Far OOD  & 88.96 & 90.86 & 90.50 & 88.46 & 89.36 & 91.90 & \textbf{92.59} & 90.52 & 69.52 & 92.49 & 86.45 & 90.12 \\
AVG      & 87.85 & 90.15 & 88.33 & 87.23 & 88.62 & 89.34 & \textbf{90.97} & 88.12 & 76.96 & 90.36 & 85.04 & 89.44 \\
\bottomrule
\end{tabular}}
\end{table}

\subsection{Uncertainty Calibration for Depth Estimation}
\label{sec:input-noise}

We now turn to regression tasks, where uncertainty calibration remains comparatively underexplored. Unlike classification, which benefits from standardized protocols such as OpenOOD, regression lacks widely adopted benchmarks despite its importance in safety-critical applications~\citep{gustafsson2023reliable}. Because regression uncertainty has received relatively little attention, only a handful of established baselines are available. We therefore compare HCM against three representative approaches: Deep Ensembles~\citep{lakshminarayanan2017simple}, MC Dropout~\citep{gal2016dropout}, and evidential learning (EDL)~\citep{amini2020deep}.

To ensure comparability across methods, we propose the following calibration procedure on a held-out validation dataset. Raw uncertainty estimates $u$ are scaled using temperature calibration, i.e., $u_{\mathrm{cal}} :=  u T$, where the scalar $T$ is selected so that the empirical coverage of $\sigma, 2 \sigma, 3\sigma$ (where $\sigma$ is the standard deviation of the validation set at each $u_{\mathrm{cal}}$) best approximates the theoretical $68\%$, $95\%$, and $99.7\%$ rules. The calibrated uncertainty $u_{\mathrm{cal}}$ is then mapped into confidence values via a negative exponential transformation, $\mathrm{conf}(x) = \exp(-u_{\mathrm{cal}}(x))$. Finally, the confidence values are min–max normalized to $[0,1]$ to enable fair comparison across methods.

We adopt standard metrics spanning calibration (empirical coverage at $1\sigma$, $2\sigma$, and $3\sigma$; regression Expected Calibration Error, $\text{ECE}_{\text{reg}}$), correlation (Pearson and Spearman), and accuracy (Root Mean Squared Error, RMSE). Together, these metrics provide complementary perspectives on whether confidence scores align with prediction errors at both local and global levels (detailed definitions in Appendix~\ref{app:metrics}).

We validate HCM in this setting on pixel-wise regression using the NYU-v2 dataset~\citep{SilbermanECCV12} for monocular depth estimation. The backbone is a U-Net–style encoder–decoder~\citep{ronneberger2015unet}. To systematically increase prediction error variance at test time, each input is perturbed as $
x' = x + a \varepsilon$ where 
$\varepsilon\sim \mathcal{N}(\mathbf{0}, \mathbf{I}) \in \mathbb{R}^{D^*}$ is Gaussian noise and $ a\sim \mathcal{U}(0,0.5)$ is a uniform random variable that scales its amplitude. 
We also perform regression under distribution shifts on UCI datasets and provide the results in Appendix~\ref{app:UCI}.

\begin{figure}[!b]
    \centering
    \includegraphics[width=0.9\linewidth, clip]{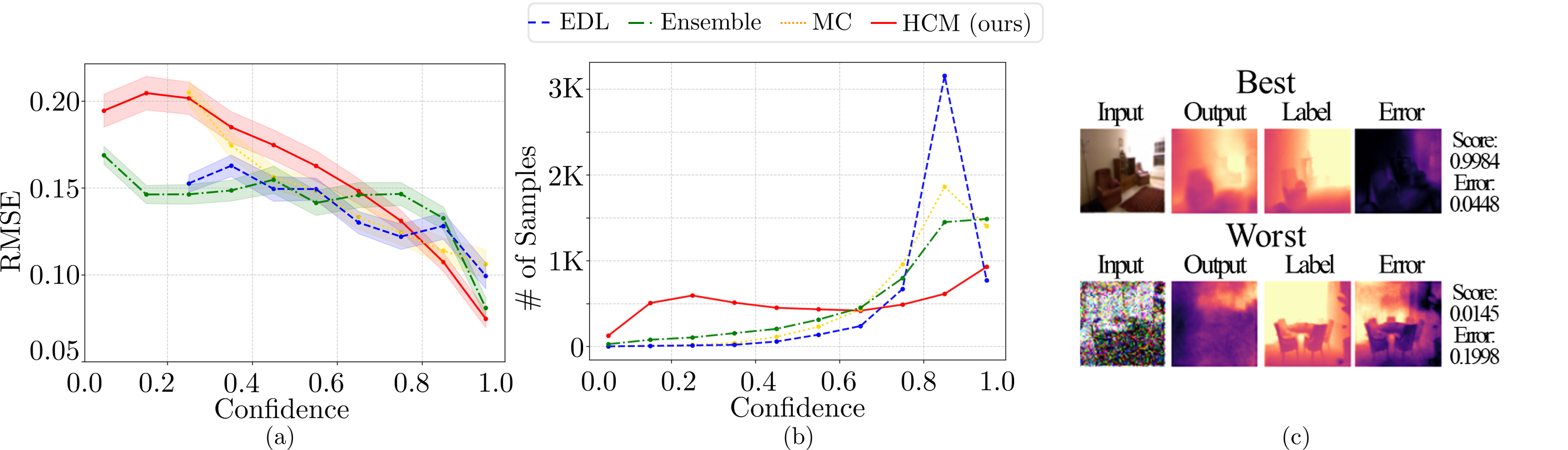}
    \caption{Qualitative and quantitative results for depth estimation. 
    (a) Calibration curves. 
    (b) Distribution of test samples across confidence intervals. 
    (c) Best- and worst-confidence prediction examples with input, prediction, ground truth, and error maps.}
    \label{fig:depth}
\end{figure}

\begin{table}[t]
\centering
\caption{Calibration results on NYU-v2 depth estimation.}
\label{tab:depth-calibration}
\small
\begin{tabular}{lccccccc}
\toprule
\textbf{Method} & \textbf{cov@1$\sigma$} & \textbf{cov@2$\sigma$} & \textbf{cov@3$\sigma$} & \textbf{Pearson} $\uparrow$ & \textbf{Spearman} $\uparrow$ & \textbf{$\text{ECE}_{\text{reg}}$} $\downarrow$ & \textbf{RMSE} $\downarrow$ \\
\midrule
EDL        & \textbf{0.6906} & 0.9252 & 0.9884 & 0.1084 & 0.1370 & \textbf{0.0609} & 0.1241 \\
MC Dropout & 0.7019 & 0.9033 & 0.9645 & 0.1932 & 0.2580 & 0.0645 & \textbf{0.1189} \\
Ensembles   & 0.6957 & \textbf{0.9280} & \textbf{0.9759} & 0.2381 & 0.4684 & 0.1838 & 0.1234 \\
HCM        & 0.7433 & 0.8798 & 0.9167 & \textbf{0.4919} & \textbf{0.5425} & 0.2160 & 0.1485 \\
\bottomrule
\end{tabular}
\end{table}

\paragraph{Results}

Figure~\ref{fig:depth} presents the regression results on the test dataset. In (a), HCM exhibits a smooth and consistent reduction in error as confidence increases. This well-behaved trend stems from Proposition~\ref{prop:lower bound}, which links our uncertainty score directly to the true error. By contrast, the baselines show weaker calibration: EDL provides only limited separation, Ensembles fail to distinguish errors in the low-confidence region, and MC Dropout produces high errors that remain overconfident. In (b), the distribution of confidence scores further highlights these differences: MC Dropout, EDL, and Ensembles collapse predictions into narrow ranges, leading to overconfident behavior, whereas HCM maintains a broad spread across the confidence scale. Finally, (c) illustrates representative examples selected by HCM: high-confidence predictions correspond to clean inputs with small errors, while low-confidence predictions arise from noisy inputs with large errors. Together, these results show that HCM’s confidence values meaningfully capture both input quality and prediction reliability.


Table~\ref{tab:depth-calibration} refines these observations. HCM achieves the strongest alignment between predicted uncertainty and realized error—showing the highest Pearson and Spearman correlations. However, HCM lags the variance-based baselines on coverage metrics and $\text{ECE}_{\text{reg}}$, and also exhibits a higher RMSE. This behavior arises from our decomposition-based prediction: instead of directly regressing the target value, HCM predicts both its magnitude and direction and reconstructs the final output as 
$Rd$. Small deviations in both components can therefore compound during reconstruction, leading to a modest increase in prediction error compared to direct regression approaches. Nevertheless, the magnitude of this RMSE gap remains limited and does not noticeably degrade overall predictive capability. Despite this slight RMSE gap, HCM provides a more reliable uncertainty score by maintaining a significantly stronger correspondence between uncertainty and actual error.

\subsection{Industrial High-Dimensional Regression}
\label{sec:chip}

We consider a real-world industrial regression task to assess the practical applicability of HCM. This experiment uses proprietary data from a semiconductor manufacturing process, where the objective is to predict the 3D geometry of etched wafer holes from spectroscopic ellipsometry measurements and to identify anomalous wafers through reliable uncertainty estimates.

The dataset consists of high-dimensional spectral signals paired with geometric targets, 
and we train a multi-layer perceptron. 
Confidence values are obtained by aggregating uncertainties at the sample level, 
followed by the same exponential transformation and temperature calibration 
described in Section~\ref{sec:input-noise}. 
Unlike the regression benchmarks in Section~\ref{sec:input-noise}, however, we apply \emph{quantile normalization} 
instead of min--max scaling. This produces confidence values that are uniformly distributed across quantile bins, enabling a direct evaluation of whether low-confidence estimates can serve as an effective filter for high-error cases in real-world industrial settings. 
We also do not inject additional noise at test time, as the industrial signals already contain substantial noise.

\begin{figure}[b]
    \centering
    \includegraphics[width=0.9\linewidth, clip]{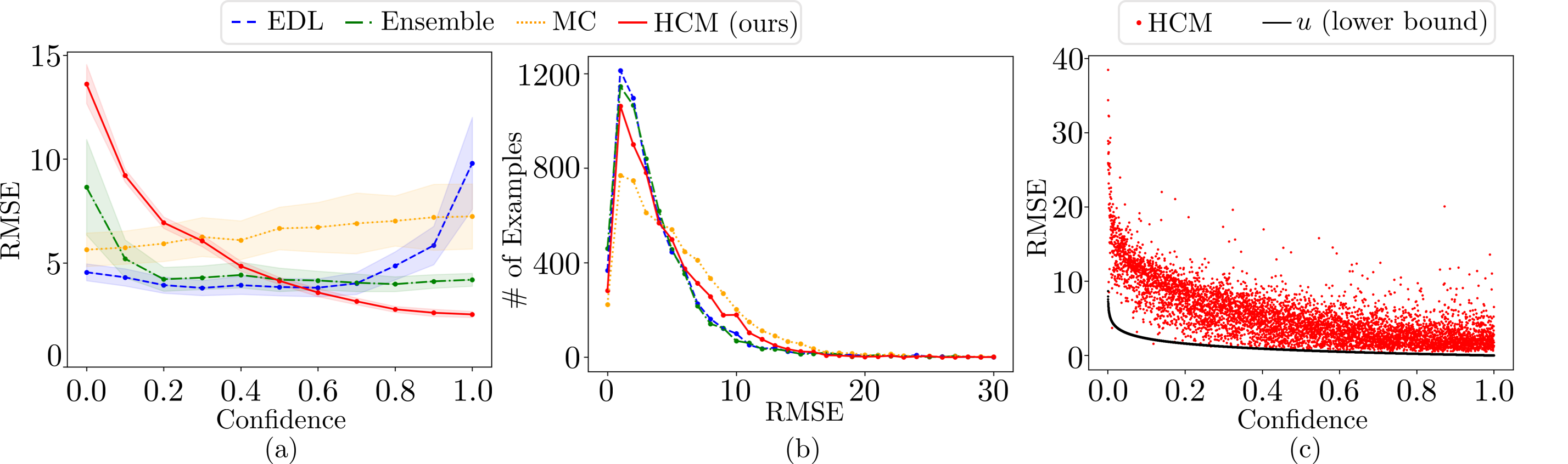}
    \caption{Industrial regression calibration. 
    (a) Calibration curves.
    (b) Distribution of test samples across prediction error intervals. (c) HCM’s RMSE scatter with $u$}
    \label{fig:industrial}
\end{figure}

\begin{table}[t]
\centering
\caption{Calibration results on the industrial regression dataset.}
\label{tab:industrial-calibration}
\small
\begin{tabular}{lrrrrrrr}
\toprule
\textbf{Method} & \textbf{cov@1$\sigma$} & \textbf{cov@2$\sigma$} & \textbf{cov@3$\sigma$} & \textbf{Pearson} $\uparrow$ & \textbf{Spearman} $\uparrow$ & \textbf{$\text{ECE}_{\text{reg}}$} $\downarrow$ & \textbf{RMSE} $\downarrow$ \\
\midrule
EDL       & 0.6822 & 0.8813 & 0.9453 & $-0.2508$ & $-0.1837$ & \textbf{2.2615} & 4.7909 \\
Ensemble  & 0.6823 & 0.8755 & 0.9375 & 0.3227 & 0.1220 & 4.1948 & \textbf{4.6783} \\
MC Dropout & 0.6789 & \textbf{0.8961} & \textbf{0.9570} & $-0.0785$ & $-0.0630$ & 4.0164 & 6.5602 \\
HCM       & \textbf{0.6799} & 0.8667 & 0.9159 & \textbf{0.8435} & \textbf{0.7579} & 2.3104 & 5.4022 \\
\bottomrule
\end{tabular}
\end{table}

\paragraph{Results.}

Figure~\ref{fig:industrial} summarizes the calibration behavior on the industrial dataset. 
In (a), HCM shows a smooth reduction in error with increasing confidence, reaching the lowest errors in the high-confidence regime while spanning the full confidence range. By contrast, the baselines overall appear uncalibrated, failing to separate high- and low-error cases. In (b), the error distributions appear broadly similar across all methods. Despite this similarity, HCM demonstrates clear discriminative power using its uncertainty score. Finally, (c) compares HCM’s error scatter with the theoretical lower bound $u$ from Proposition~\ref{prop:lower bound}. The figure confirms Proposition~\ref{prop:lower bound}, which guarantees that when 
$\epsilon \approx 0$, the uncertainty score 
$u$ provides a valid lower bound on the prediction error. Empirically, the bound is nearly tight in this dataset, with $u$ closely following errors. This alignment between theory and practice explains why HCM achieves substantially stronger calibration and overall performance than competing methods.

\begin{wrapfigure}{r}
{0.49\textwidth}
    \centering
    \vspace{-0.4cm}
\includegraphics[width=\linewidth, clip]{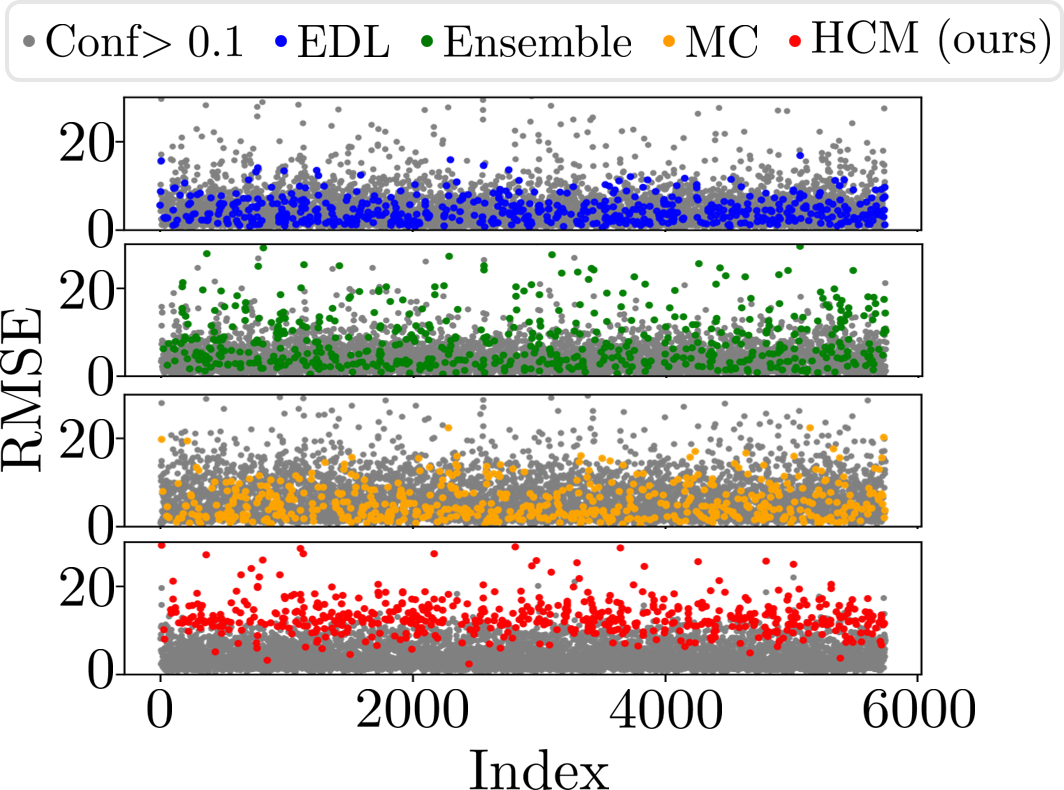}
    \caption{Samples with confidence below 0.1 (colored) and above 0.1 (gray).}
    \label{fig:ind_samples}
    \vspace{-0.4cm}
\end{wrapfigure}

Table~\ref{tab:industrial-calibration} presents the quantitative results on the industrial dataset. 
HCM achieves by far the strongest Pearson and Spearman correlations, 
demonstrating that its uncertainty scores are most tightly aligned with the true errors. 
However, HCM attains lower coverage at $2\sigma$ and $3\sigma$, compared to the statistically grounded methods that explicitly estimate variance (EDL, MC Dropout, and Ensembles). 
RMSE is also higher for HCM, reflecting its sensitivity to the intrinsic noise in the industrial signals, 
whereas variance-based methods exhibit stronger robustness. 

Figure~\ref{fig:ind_samples} highlights the industrial test results across methods. HCM consistently assigns low confidence ($<0.1$) to samples with large prediction errors, confirming Proposition~\ref{prop:lower bound} in practice: when the uncertainty score is large, the error is necessarily large as well. This enables HCM to detect unreliable predictions at test time without access to ground truth, relying solely on its uncertainty score. By contrast, EDL, Ensembles, and MC Dropout struggle to separate high-error cases into low-confidence regions. These findings make HCM particularly well-suited for safety-critical industrial applications, where identifying and filtering out unreliable predictions is crucial for trustworthy deployment.

\section{Conclusion}
We introduced HCM, a prior-free and lightweight framework for uncertainty quantification grounded in geometric structure.
By aligning uncertainty scores directly with prediction error, HCM provides both theoretical guarantees and strong empirical performance across regression, classification, depth estimation, and industrial datasets. Its core strength lies in geometric decomposition: separating outputs into magnitude and direction yields interpretable scores and enables full use of the confidence spectrum.
Unlike variance-based methods prone to overconfidence, HCM maintains fine-grained calibration over the entire [0,1] interval, allowing reliable sample-wise separation.

\textbf{Limitations and future work.}

The uncertainty score $u(x)$ may depend on training dynamics. 
As seen in our ablation study (Appendix~\ref{app:ablation}), overly large $\lambda$ or an excessively high learning rate can destabilize the unit-norm constraint, pushing $d$ off the hypersphere and hindering the learning of $R$. Thus, $u(x)$ may be influenced by the interaction between constraint strength and optimization settings, and it does not explicitly separate aleatoric from epistemic uncertainty.
HCM currently operates only at the output level and requires light 
fine-tuning, which may limit applicability to frozen or zero-shot 
large language models. In addition, the method assumes that targets 
admit a magnitude--direction decomposition, which may restrict tasks 
with intrinsically multi-valued outputs.
Future work includes extending HCM to larger or multimodal models, 
improving uncertainty decomposition, and leveraging HCM’s ability to 
flag high-error samples for active learning.

\section*{Acknowledgements}
This research was supported by Samsung Electronic Co., Ltd (IO230425-06044-01).

\bibliography{iclr2026_conference}
\bibliographystyle{iclr2026_conference}

\appendix
\section{Appendix}

\subsection{Experimental Setup}
\label{app:setup}

All experiments were conducted on a single NVIDIA GeForce RTX~4090 GPU. 
The software environment was Python 3.8.10 (GCC 9.4.0) with PyTorch 2.4.0+cu121, 
running on CUDA 12.1 and cuDNN 9.0.1.

\paragraph{Two Moons.}
We use the standard two moons dataset with 2-dimensional inputs. 
The model is a simple MLP consisting of two hidden layers with 16 units each and ReLU activations. 
It has two output branches: a 2D direction vector $d$ and a scalar magnitude $R$. 
Training is performed with mean squared error (MSE) loss using the Adam optimizer 
(learning rate $0.001$) for 1000 epochs.

\paragraph{Classification OOD detection on CIFAR-10.}
We evaluate HCM on large-scale OOD detection using CIFAR-10 as the in-distribution dataset. 
The backbone is ResNet-18~\citep{he2016deep}, modified with hyperspherical decomposition into a class-wise direction head and a scalar magnitude head. 
Training uses SGD with momentum $0.9$, weight decay $10^{-4}$, and initial learning rate $0.1$ decayed at epochs 30, 60, and 90. 
We train for 100 epochs with batch size 64. 

We adopt MixUp augmentation, supporting both pairwise interpolation ($k=2$) and Dirichlet-based multi-sample interpolation ($k=20$, $\alpha=0.5$). 
Losses are computed with MSE against soft labels, consistent with the hyperspherical objective. 

At test time, we follow the OpenOOD protocol~\citep{zhang2023openood} and evaluate on six OOD datasets:
\textbf{near-OOD datasets}
(CIFAR-100~\citep{krizhevsky2009learning}, TinyImageNet~\citep{deng2009imagenet}) 
and \textbf{far-OOD dataset} (MNIST~\citep{lecun1998gradient}, SVHN~\citep{netzer2011reading}, 
Texture~\citep{cimpoi2014describing}, Places365~\citep{zhou2018places}).
We report AUROC and FPR@95TPR. 

In addition, we instrument runtime efficiency by measuring parameter count, FLOPs/MACs, pure inference latency/throughput (batch sizes 1 and 64), GPU peak memory usage, and HCM scoring cost (average milliseconds per image).

\paragraph{Uncertainty Calibration for Depth Estimation.}
    We further validate HCM on pixel-wise regression using the NYU-v2 dataset~\citep{SilbermanECCV12} for monocular depth estimation. 
We construct paired image–depth datasets from official training splits, and resize all images to $64\times64$. 
The data is divided into 80\% training, 10\% validation, and 10\% testing. 

The backbone is a U-Net style encoder–decoder~\citep{ronneberger2015unet}, denoted HCM-UNet, 
with four downsampling blocks, a bottleneck, and four upsampling blocks. 
At the output stage, the network predicts both a pixel-wise direction map $d$ and a magnitude map $R$, 
whose product reconstructs the depth image. 

Training is performed with mean squared error (MSE) loss for 200 epochs, 
using the Adam optimizer (learning rate $2\times 10^{-4}$, betas $(0.9,0.999)$) and 
a multi-step scheduler that decays the learning rate by $0.1$ at epochs 50, 100, and 150. 
All weights are initialized from a normal distribution. 

At test time, Gaussian noise $a \cdot \mathcal{N}(0,1)$ with $a \in [0,0.5]$ 
is added to input images to probe robustness. 
Evaluation metrics include sample-wise RMSE, uncertainty–error scatter plots, 
selective prediction metrics, and correlation (Spearman’s $\rho$) between confidence scores and errors. 
In addition, we visualize best- and worst-confidence predictions, 
and report error distributions across confidence bins using box plots.

\paragraph{Industrial High-Dimensional Regression.}
We further evaluate HCM on a real-world semiconductor manufacturing dataset, 
where the task is to predict the 3D geometry of etched wafer holes from high-dimensional spectral inputs. 
The dataset consists of 1,375 training samples and 4,596 test samples. 
Each input is a 400-dimensional spectrum signal, and the target is a 25-dimensional geometry vector. 

The regression model is a fully connected network with three hidden layers of width 200, 100, and 50, 
each followed by a LeakyReLU activation (negative slope $0.01$). 
The network produces two outputs: a scalar magnitude $R$ and a 25-dimensional direction vector $d$. 
Training is performed with mean squared error (MSE) loss for 1000 epochs, 
using the Adam optimizer with learning rate $0.002$, betas $(0.9,0.999)$, and weight decay $10^{-4}$. 
The learning rate is scheduled to decay by a factor of $0.1$ at epochs 150, 300, 450, 600, 750, and 900.

\paragraph{UCI benchmark regression.}
We evaluate HCM on six standard UCI regression datasets obtained from OpenML~\citep{OpenML2013, bischl2025openml}:
\textit{Wine Quality}, \textit{Concrete Strength}, \textit{Energy Efficiency}, \textit{Kin8nm},
\textit{Power Plant}, and \textit{Yacht Hydrodynamics}.
Each dataset is split into 80\% training 10\% validation and 10\% testing with a fixed random seed for reproducibility.

The model is a fully connected network with three hidden layers of 20 units each and LeakyReLU activations
(negative slope $0.01$). It outputs both a scalar magnitude $R$ and a low-dimensional direction vector $d$.
Training is performed with mean squared error (MSE) loss for 200 epochs using the Adam optimizer
(learning rate $10^{-4}$, betas $(0.9,0.999)$, weight decay $10^{-4}$).

At test time, we inject additive Gaussian noise $\epsilon \sim \mathcal{N}(0, \sigma^2 I)$ into the input features,
with $\sigma \in {0,1,2,3,4,5}$.
This protocol evaluates both the global sensitivity of uncertainty scores under increasing perturbations
and their per-sample calibration at fixed noise levels.

\paragraph{Text Classification OOD Detection.}
We evaluate the extensibility of HCM to natural-language classification using AG News~\citep{zhang2015character} as the in-distribution dataset. 
A bert-base-uncased encoder~\citep{devlin2019bert} is first fine-tuned on the AG News training set, after which the classifier head is replaced with the HCM decomposition: a direction head $d \in \mathbb{R}^4$ and a scalar magnitude head $R \in \mathbb{R}$ applied to the CLS representation. 
The model is then trained for one epoch using the HCM loss with batch size 16, maximum sequence length 128, and learning rate $2\times 10^{-6}$.

To assess OOD detection, we adopt a standard binary ID--OOD evaluation protocol using four text datasets:
\textit{Yelp Polarity}, \textit{IMDB}, \textit{Emotion}, and \textit{DBPedia-14} ~\citep{zhang2015character, maas2011learning, saravia2018carer}. 
For each dataset, we compute AUROC, FPR@95TPR, and ID accuracy on AG News. 
All results are averaged over five independent random seeds.

\paragraph{Additional 1D Regression.}

To further examine how hyperspherical decomposition behaves in the scalar regression setting, 
we conduct a series of controlled 1D experiments using synthetic datasets with diverse noise structures: 
heteroscedastic Gaussian noise, heteroscedastic Laplace noise, bimodal Gaussian mixture noise, 
and multi-valued regression ($y=\pm\sqrt{x}$). 
For each setting, we train a lightweight fully connected network that predicts a 2D direction vector $d$ 
and a scalar magnitude $R$, where scalar targets are embedded into a 2D space via duplication 
$(y,y)$ to enable the same decomposition as in higher-dimensional regression. 
All models are trained for 1000 epochs using Adam (learning rate $10^{-3}$) with a multi-step schedule, 
and we report both prediction quality and uncertainty behavior across noise regimes.
 To contextualize HCM's behavior, we compare against representative uncertainty baselines: 
\textbf{Mixture Density Networks (MDN)}\citep{el2023deep}, \textbf{Bayesian Neural Networks (BNN)}\citep{sun2017learning} with learned 
epistemic uncertainty, and \textbf{Bayesian} models with a global aleatoric noise parameter. 
All models use comparable network capacity (three hidden layers of width 100) and are trained 
for 1000 epochs using Adam with learning rate $10^{-3}$ and a multi-step decay schedule.
For each noise regime, we report predictive fit and per-sample uncertainty behavior.

\subsection{Regression Calibration Metrics}
\label{app:metrics}

We briefly describe the metrics used in our regression experiments:

Let $\mathcal{D}=\{(x_i,y_i)\}_{i=1}^N$ be the evaluation set. 
The model produces predictions $\hat{y}_i$ and an associated uncertainty $u_i$. 
After temperature calibration, we obtain
\[
u_{\mathrm{cal},i} = T \cdot u_i,
\]
and define the per-sample error $r_i$ as the RMSE of prediction $\hat{y}_i$ with respect to the target $y_i$. 
Confidence values are further computed as $c_i = \exp(-u_{\mathrm{cal},i})$, but the metrics are based on $(u_{\mathrm{cal}},r)$.

\textbf{Coverage at $1\sigma$, $2\sigma$, $3\sigma$.} 
We measure the empirical coverage at three scales:
\[
\mathrm{Cov}@k\sigma = \frac{1}{N}\sum_{i=1}^N \mathbf{1}\{\, r_i \leq k \, u_{\mathrm{cal},i}\,\}, 
\quad k \in \{1,2,3\}.
\]
The desired values are $(0.68,\,0.95,\,0.997)$ according to the Gaussian $68$–$95$–$99.7$ rule. 
Closer is better.

\textbf{Regression Expected Calibration Error ($\text{ECE}_{\text{reg}}$).}
Samples are partitioned into $B$ bins according to $u_{\mathrm{cal}}$. 
For bin $b$, let $S_b = \{ i \mid \tau_{b-1} \leq u_{\mathrm{cal},i} < \tau_b \}$, and define
\[
\bar{u}_b = \frac{1}{|S_b|}\sum_{i \in S_b} u_{\mathrm{cal},i}, 
\qquad
\bar{r}_b = \frac{1}{|S_b|}\sum_{i \in S_b} r_i.
\]
Then
\[
\text{ECE}_{\text{reg}} 
= \frac{1}{\sum_b |S_b|} \sum_{b=1}^B |S_b| \cdot \big| \bar{u}_b - \bar{r}_b \big|.
\]
A smaller value indicates better calibration.

\textbf{Correlation.}
We report both Pearson correlation,
\[
\rho_{\text{Pearson}} = 
\frac{\sum_{i=1}^N (u_{\mathrm{cal},i}-\bar{u}_{\mathrm{cal}})(r_i-\bar{r})}
{\sqrt{\sum_{i=1}^N (u_{\mathrm{cal},i}-\bar{u}_{\mathrm{cal}})^2}\;\sqrt{\sum_{i=1}^N (r_i-\bar{r})^2}},
\]
and Spearman correlation,
\[
\rho_{\text{Spearman}} = \rho_{\text{Pearson}} \big(\mathrm{rank}(u_{\mathrm{cal}}),\; \mathrm{rank}(r)\big).
\]
Higher is better, reflecting stronger alignment between uncertainty and error.

\textbf{Root Mean Squared Error (RMSE).}
Finally, we report accuracy as the average per-sample RMSE:
\[
\mathrm{RMSE} = \frac{1}{N}\sum_{i=1}^N r_i.
\]
Lower is better.

\subsection{Details of the Training Objective}
\label{app:training-obj}
From the constrained formulation in (\ref{eq:primal}), 
expanding the prediction error with $e_y = \hat{R} e_d + e_R d$ yields
\[
\|e_y\|_2^2 
= \|\hat{R} e_d\|_2^2 + e_R^2 + 2\hat{R} e_R \langle e_d, d \rangle,
\]
which contains a directional term weighted by $\hat{R}^2$ 
as well as the cross term $2\hat{R} e_R \langle e_d, d \rangle$. In our training loss in Section~\ref{sec:training-obj}), 
we instead use the ground-truth magnitude $R^2$ to weight the directional error:
\[
\mathcal{L}_{\text{dir}} = R^2 \|e_d\|_2^2.
\]
Since $\hat{R}^2 = R^2 + 2Re_R + e_R^2$, 
near the optimum ($\hat{R} \approx R$) the difference between using $\hat{R}^2$ and $R^2$ 
is only $O(|e_R|)$.

\subsection{Proof details for Proposition 2}
\label{app:prop2}

As defined in Section~\ref{sec:training-obj}, the total training objective is\begin{equation*}
\mathcal{L}_{\text{total}}
= \;\phi_d\!\big(R\,\|e_d\|_2\big)
\;+\; \phi_R\!\big(e_R\big)
\;+\; \lambda_{\text{norm}}\;\phi_{\text{norm}}\!\big(\,\big|\|\hat d\|_2 - 1\big|\,\big).
\end{equation*}
Excluding the norm-regularization term, the loss naturally decomposes into a
direction part $\phi_d$ and a magnitude part $\phi_R$.
For the proof, it is convenient to analyze the two branches separately.

\paragraph{Step 1: Magnitude branch.}
Let $y = g(x)+\varepsilon$, where the ground truth $y$ is decomposed into the deterministic part $g(x)$ and the noise $\varepsilon \sim \mathcal{N}(0,\sigma^2 I_D)$. 
Since $\mathbb{E}\big[\|\varepsilon\|_2^2\big] = D\sigma^2$ and $\mathbb{E}\langle g(x),\varepsilon\rangle=0$, we obtain
\[
\mathbb{E}\big[R^2\big]=\mathbb{E}\big[\|y\|_2^2\big] =\mathbb{E}\big[\|g(x)\|_2^2+2\langle g(x),\varepsilon\rangle+\|\varepsilon\|_2^2 \big] = \|g(x)\|_2^2 + D\sigma^2.
\]
Hence, the minimizer of the magnitude branch is
\[
\hat R^\star(x)^2 = {\|g(x)\|_2^2 + D\sigma^2}.
\]

\paragraph{Step 2: Direction branch.}

For the direction branch, we focus on the normalized target
\[
d(x) = \frac{y}{\|y\|_2}
= \frac{g(x)+\varepsilon}{\|g(x)+\varepsilon\|_2},
\qquad \varepsilon \sim \mathcal{N}(0,\sigma^2 I_D).
\]
As discussed above, it is natural to interpret $d(x)$ through the von Mises--Fisher (vMF) distribution~\citep{mardia2009directional}: 
we view it as a random unit vector whose mean direction is
\[
\mu(x) \;=\; \frac{g(x)}{\|g(x)\|_2}.
\]
For a vMF-distributed random unit vector $d$, the expectation always has the form
\[
\mathbb{E}[d] \;=\; A_D(\kappa)\,\mu,
\]
i.e., it is a scalar multiple of the mean direction.  
In our setting, this implies that the expectation of the normalized noisy target must be of the form
\[
\hat{d}^*(x)=\mathbb{E}\!\left[\frac{y}{\|y\|_2}\right]
\;=\;
C\,\frac{g(x)}{\|g(x)\|_2},
\]
for some alignment coefficient $C \in (0,1]$ that depends on the noise level and the dimension $D$.

To compute $C$, we project $d(x)$ onto the true direction $\mu(x)$ and take expectation:
\[
C
= \mu(x)^\top \,\mathbb{E}\!\left[d(x)\right]
= \mathbb{E}\!\left[\mu(x)^\top d(x)\right]
= \mathbb{E}\!\left[
\frac{y}{\|y\|_2} \cdot \frac{g(x)}{\|g(x)\|_2}
\right]
= \frac{1}{\|g(x)\|_2}\,
\mathbb{E}\!\left[\frac{y\cdot g(x)}{\|y\|_2}\right].
\]
Thus the problem of characterizing the vector expectation reduces to computing the scalar quantity
$\mathbb{E}[y\cdot g(x)/\|y\|_2]$.

Using the expansion
\[
\frac{1}{\|g(x)+\varepsilon\|_2}
=
\frac{1}{\|g(x)\|_2}
\left(
1
+ \frac{2\,g(x)\!\cdot\!\varepsilon}{\|g(x)\|_2^2}
+ \frac{\|\varepsilon\|_2^2}{\|g(x)\|_2^2}
\right)^{-1/2},
\]
and a Taylor series up to order $\sigma^4$, we obtain
\[
\mathbb{E}\!\left[\frac{y\cdot g(x)}{\|y\|_2}\right]
=
\|g(x)\|_2
+ \frac{(1-D)\sigma^2}{2\|g(x)\|_2}
+ O\!\left(\frac{(D+2)(D+4)\sigma^4}{\|g(x)\|_2^3}\right).
\]
Dividing by $\|g(x)\|_2$ gives
\[
C
=
1 + \frac{(1-D)\sigma^2}{2\|g(x)\|_2^2}
  + O\!\left(\frac{(D+2)(D+4)\sigma^4}{\|g(x)\|_2^4}\right).
\]

Our uncertainty measure for the direction branch is based on the deviation of the squared norm of the mean direction from $1$, namely
\[
1-\|\hat{d}^*(x)\|_2^2 = 1 - \left\|
\mathbb{E}\!\left[\frac{y}{\|y\|_2}\right]
\right\|_2^2
= 1 - C^2.
\]
Using the above expansion of $C$, we obtain
\[
1 - C^2
=\frac{(D-1)\sigma^2}{\|g(x)\|_2^2}
+ O\!\left(\frac{(5D^2+22D+33)\sigma^4}{4\|g(x)\|_2^4}\right).
\]

---

\paragraph{Step 3: Product form.}

Combining the results from Step~1 and Step~2, we multiply the magnitude and direction
estimates. From Step~1,
\[
\hat{R}^2 = \|g(x)\|_2^2 + D\sigma^2,
\]
and from Step~2,
\[
1 - \|\hat{d}\|_2^2
    = \frac{(D-1)\sigma^2}{\|g(x)\|_2^2}
      + O\!\left(\frac{(D+2)(D+4)\sigma^4}{\|g(x)\|_2^4}\right).
\]
Thus,
\[
\hat{R}^2\!\left|1-\|\hat{d}\|_2^2\right|
=
(D-1)\sigma^2
+ O\!\left(
\frac{(9D^2+18D)\sigma^4}{4\|g(x)\|_2^2}
+
\frac{D(5D^2+22D+33)\sigma^6}{4\|g(x)\|_2^4}
\right).
\]
Dividing by $D-1$ gives
\[
\frac{\hat{R}^2\left|1-\|\hat{d}\|_2^2\right|}{D-1}
=
\sigma^2
+ O\!\left(
\frac{(D+2)(D+4)\sigma^4}{(D-1)\|g(x)\|_2^2}
+
\frac{D(D+2)(D+4)\sigma^6}{(D-1)\|g(x)\|_2^4}
\right).
\]

Keeping only the dominant remainder term yields the simplified form
\[
\frac{\hat{R}^2\left|1-\|\hat{d}\|_2^2\right|}{D-1}
=
\sigma^2
+ O\!\left(
\frac{(D+2)(D+4)\sigma^4}{(D-1)\|g(x)\|_2^2}
\right).
\]
Therefore, our estimator satisfies
\[
\hat{\sigma}^2(x)
=
\frac{\hat{R}^2\left|1-\|\hat{d}\|_2^2\right|}{D-1}
= \tfrac{1}{D-1}\,u(x)\,\Big(\hat R(x)(1+\|\hat d(x)\|_2)\Big) =
\sigma^2
+ O\!\left(
\frac{(D+2)(D+4)\sigma^4}{(D-1)\|g(x)\|_2^2}
\right) .
\]


\subsection{Detailed OOD Detection Results}
\label{app:FPR}
This section provides detailed results complementing the CIFAR-10 experiments in Section~\ref{sec:cifar}.
While the main text reports only average AUROC for near- and far-OOD benchmarks, here we present per-dataset AUROC and FPR@95TPR (Tables~\ref{tab:ood-auroc} and \ref{tab:ood-fpr}).
We further include an efficiency benchmark (Table~\ref{tab:efficiency}), which compares the computational overhead, latency, throughput, and memory usage of different methods.
These results demonstrate that HCM achieves competitive efficiency while maintaining reliable OOD detection performance.

\begin{table}[h]
\centering

\caption{AUROC (\%) for near- and far-OOD detection across different datasets. Higher is better.}

\label{tab:ood-auroc}
\resizebox{\textwidth}{!}{
\begin{tabular}{lcccccccc}
\toprule
\textbf{Method} & \textbf{CIFAR-100} & \textbf{TIN} & \textbf{MNIST} & \textbf{SVHN} & \textbf{Texture} & \textbf{Places365} & \textbf{AVG}\\
\midrule
MSP      & 86.97 & 86.49 & 90.67 & 88.83 & 87.89 & 87.46 & 88.05\\
Ensemble & 89.16 & 88.62 & 92.06 & 90.52 & 91.63 & 89.23 & 90.20\\
MC Dropout & 85.30 & 85.12 & 93.81 & 91.07 & 85.76 & 89.34 & 88.40\\
ODIN     & 85.51 & 85.47 & 99.18 & 82.37 & 83.07 & 88.21 & 87.30 \\
Energy   & 87.34 & 87.69 & 97.41 & 85.38 & 85.52 & 89.12 & 88.74\\
MDS      & 84.92 & 84.89 & 92.03 & 94.31 & 94.36 & 86.90 & 89.55 \\
KNN      & 88.29 & 87.84 & 95.10 & 93.48 & 93.29 & 88.47 & 91.26\\
ViM      & 84.18 & 84.22 & 92.29 & 90.10 & 93.75 & 85.94 & 88.41\\
fDBD     & 88.86 & 88.87 & 94.44 & 65.52 & 57.27 & 62.86 & 76.30\\
NCI      & 86.57 & 86.40 & 95.96 & 92.21 & 93.14 & 88.64 & 90.49\\
\midrule
HCM              & 81.89 & 82.57 & 85.64 & 89.95 & 87.09 & 83.12 & 85.04\\
HCM mix              & 87.80 & 87.99 & 89.81 & 92.61 & 90.83 & 88.02 & 89.51\\
\bottomrule
\end{tabular}}
\end{table}

\begin{table*}[h]
\centering
\caption{FPR@95TPR (\%) for near- and far-OOD detection across different datasets. 
Lower is better.}
\label{tab:ood-fpr}
\resizebox{\textwidth}{!}{
\begin{tabular}{lcccccccc}
\toprule
\textbf{Method} & \textbf{CIFAR-100} & \textbf{TIN} & \textbf{MNIST} & \textbf{SVHN} & \textbf{Texture} & \textbf{Places365} & \textbf{AVG} \\
\midrule
MSP      & 53.08 & 43.27 & 23.64 & 25.82 & 34.96 & 42.47 & 37.21\\
Ensemble & 33.73 & 38.23 & 23.73 & 26.51 & 24.86 & 34.60 & 30.28 \\
MC Dropout & 43.80 & 47.02 & 26.91 & 28.62 & 38.45 & 33.88 & 36.45 \\
ODIN     & 54.22 & 58.00 & 4.04 & 53.15 & 51.63 & 47.74 & 46.28\\
Energy   & 46.63 & 49.25 & 10.33 & 43.19 & 50.89 & 44.13 & 40.74 \\
MDS      & 56.38 & 57.31 & 27.42 & 22.84 & 29.61 & 52.50 & 41.01 \\
KNN      & 40.46 & 44.87 & 15.36 & 19.19 & 25.59 & 41.41 & 31.15 \\
ViM      & 58.96 & 59.63 & 25.88 & 36.27 & 32.59 & 55.67 & 44.83 \\
fDBD     & 53.20 & 50.17 & 35.00 & 92.81 & 84.78 & 92.44 & 68.07 \\
NCI      & 57.11 & 60.67 & 16.59 & 24.00 & 26.15 & 47.57 & 38.68 \\
\midrule
HCM              & 79.24 & 77.81 & 69.56 & 46.78 & 66.12 & 76.94 & 69.40 \\
HCM mix              & 48.30 & 50.83 & 42.19 & 21.06 & 37.05 & 53.27 & 42.11 \\
\bottomrule
\end{tabular}}
\end{table*}

\begin{table}[h]
\centering
\caption{Efficiency benchmark of OOD detection methods. 
All results measured on CIFAR-10 backbone (ResNet). 
Latency and throughput are measured on ID subset.}
\label{tab:efficiency}
\resizebox{\textwidth}{!}{
\begin{tabular}{lcccc}
\toprule
\textbf{Method} & \textbf{Extra Cost} & \textbf{Latency (ms/img)} & \textbf{Throughput (img/s)} & \textbf{Peak Mem (MB)} \\
\midrule
MSP        & Softmax only                   & 0.15--0.20 & 6k--7k     & 227 \\
Ensemble (M=5) & 5$\times$ forward passes    & 0.25--0.38 & 2.6k--4.3k & 577 \\
MC Dropout & 50$\times$ stochastic passes   & 1.6--1.8   & $\sim$600  & 227 \\
ODIN       & MSP + backward wrt input       & 0.36--0.66 & 1.5k--2.8k & 452 \\
Energy        & log-sum-exp energy             & 0.16--0.33 & 3k--6k     & 227 \\
MDS        & feature + Mahalanobis score    & 0.20--0.36 & 4.9k--5.9k & 228 \\
KNN        & feature + KNN search           & 0.51--0.66 & 1.5k--2.0k & 227 \\
VIM        & feature + covariance eigenspace & 0.18--0.20 & 5.4k--5.5k & 227 \\
fDBD       & feature + distance-based score & 0.21       & 4.7k--6.3k & 228 \\
NCI        & feature + canonical corr score & 0.37       & 2.7k--6.5k & 228 \\
HCM (ours)      & magnitude + direction calc     & 0.19       & 5k--6.8k   & 226 \\
\bottomrule
\end{tabular}
}
\end{table}

Table~\ref{tab:ood-auroc} reports AUROC values for each individual OOD dataset. 
While the main text focused on average performance across near- and far-OOD groups, 
these per-dataset results provide a finer-grained view. 
HCM by itself underperforms compared to several specialized baselines, 
but when combined with the proposed mixing strategy (HCM mix), 
performance improves substantially and becomes competitive across diverse OOD types. 
For example, AUROC gains are particularly clear on CIFAR-100 and SVHN, 
indicating that the mixing strategy helps in both semantically similar (near-OOD) 
and dissimilar (far-OOD) settings.

Table~\ref{tab:ood-fpr} presents FPR@95TPR, a stricter metric that highlights 
false positive behavior under high recall. 
The results confirm the trend seen in AUROC: 
HCM alone struggles on some datasets, especially near-OOD cases, 
but HCM mix significantly reduces false positives. 
Notably, the improvements are consistent across both image-based 
(e.g., TinyImageNet, Places365) and texture-style OOD data, 
demonstrating the robustness of the approach.

Finally, Table~\ref{tab:efficiency} evaluates efficiency trade-offs. 
Here, sampling-based approaches such as Ensembles and MC Dropout 
incur substantial costs in latency, throughput, and memory. 
In contrast, HCM achieves a favorable balance: 
its runtime and memory footprint are nearly identical to MSP or Energy, 
yet it offers stronger calibration and competitive OOD detection performance. 
This efficiency advantage makes HCM particularly suitable for deployment 
in real-time or resource-constrained environments.


\subsection{Effect of Mix-up on OOD Detection}
\label{app:mixup}

Table~\ref{tab:auroc} and Table~\ref{tab:fpr} report the OOD detection performance of various methods when trained with mix-up. 
We observe that, in general, applying mix-up to existing baselines does not consistently improve OOD detection performance, 
and in some cases even degrades it. 
For example, MSP, ODIN, and Energy all suffer substantial drops in AUROC 
($-3.61\%$, $-20.95\%$, and $-15.34\%$, respectively), while their FPR@95TPR values increase sharply. 
Other baselines such as Ensemble, MDS, and KNN show moderate gains 
(e.g., KNN improves AUROC by $+2.54\%$ and reduces FPR by $-3.31\%$), 
but these changes remain relatively small.

In contrast, our proposed HCM benefits significantly from mix-up: 
thanks to its hyperspherical decomposition, mix-up enforces linear relations between interpolated samples on the unit sphere, 
leading to a meaningful gain in OOD detection performance. 
Specifically, HCM achieves an average AUROC improvement of $+6.38\%$ and a reduction in FPR@95TPR by $-37.67\%$, 
representing the largest improvement among all methods. 
Notably, HCM’s AUROC consistently increases across all OOD datasets, 
while its FPR decreases markedly, especially on SVHN and MNIST. 

These results highlight that the improvement under mix-up is not universal, but rather a structural advantage of HCM. 
The hyperspherical formulation uniquely aligns with the interpolation principle of mix-up, 
allowing OOD detection performance to scale with the regularization. 
By contrast, other methods either lack this geometric structure or rely on variance-based uncertainty estimates 
that are not directly compatible with mix-up. 
Overall, the findings demonstrate that mix-up is most effective when combined with HCM’s representation, 
and does not yield general benefits across existing methods.

\begin{table*}[h]
\centering
\caption{AUROC (\%) results on CIFAR-10 (ID) with multiple OOD datasets. Higher is better. 
\textbf{Diff} denotes the change from vanilla to mix-up.}
\label{tab:auroc}
\resizebox{\textwidth}{!}{
\begin{tabular}{lcccccccc}
\toprule
\textbf{Method} & \textbf{CIFAR-100} & \textbf{SVHN} & \textbf{DTD} & \textbf{TIN} & \textbf{MNIST} & \textbf{Places365} & \textbf{AVG} & \textbf{Diff} \\
\midrule
MSP        & 83.93 & 83.84 & 82.97 & 85.09 & 85.25 & 85.57 & 84.44 & $-$3.61 \\
Ensemble   & 89.96 & 92.28 & 92.12 & 89.36 & 93.33 & 89.90 & 91.16 & $+$0.96 \\
MC Dropout & 85.12 & 84.62 & 83.50 & 84.44 & 95.57 & 86.44 & 86.61 & $-$1.79 \\
ODIN       & 66.28 & 39.11 & 74.58 & 69.35 & 69.80 & 79.00 & 66.35 & $-$20.95 \\
Energy     & 75.48 & 55.39 & 74.77 & 79.91 & 75.56 & 79.29 & 73.40 & $-$15.34 \\
MDS        & 85.57 & 96.42 & 90.01 & 84.86 & 99.19 & 91.86 & 91.31 & $+$1.76 \\
KNN        & 89.73 & 96.02 & 94.42 & 89.67 & 96.99 & 90.02 & 93.80 & $+$2.54 \\
VIM        & 82.00 & 92.61 & 95.23 & 83.49 & 92.15 & 86.54 & 88.67 & $+$0.26 \\
fDBD       & 25.82 & 25.03 & 21.97 & 25.06 &  5.72 & 19.34 & 20.49 & $-$55.81 \\
NCI        & 74.02 & 86.19 & 82.69 & 75.23 & 96.65 & 77.49 & 82.04 & $-$8.45 \\
\midrule
HCM        & 88.31 & 92.31 & 90.93 & 88.44 & 93.06 & 88.76 & 90.30 & $+$5.26 \\
\bottomrule
\end{tabular}}
\end{table*}


\begin{table*}[h]
\centering
\caption{FPR@95TPR (\%) on CIFAR-10 (ID) with multiple OOD datasets. Lower is better. 
\textbf{Diff} denotes the change from vanilla to mix-up.}
\label{tab:fpr}
\resizebox{\textwidth}{!}{
\begin{tabular}{lcccccccc}
\toprule
\textbf{Method} & \textbf{CIFAR-100} & \textbf{SVHN} & \textbf{DTD} & \textbf{TIN} & \textbf{MNIST} & \textbf{Places365} & \textbf{AVG} & \textbf{Diff} \\
\midrule
MSP        & 75.37 & 79.44 & 87.76 & 69.62 & 87.08 & 71.47 & 78.12 & $+$40.91 \\
Ensemble   & 31.99 & 22.71 & 24.25 & 37.84 & 19.57 & 33.85 & 28.37 & $-$1.91 \\
MC Dropout & 28.70 & 21.55 & 25.90 & 27.43 & 20.71 & 23.69 & 24.33 & $-$12.12 \\
ODIN       & 86.24 & 95.37 & 91.80 & 83.56 & 96.92 & 79.57 & 88.24 & $+$41.96 \\
Energy     & 85.69 & 95.54 & 92.14 & 80.40 & 93.06 & 84.07 & 88.15 & $+$47.41 \\
MDS        & 60.77 & 13.95 & 39.28 & 63.03 &  3.11 & 49.51 & 38.44 & $-$2.57 \\
KNN        & 38.01 & 12.65 & 21.20 & 42.71 & 10.05 & 42.44 & 27.84 & $-$3.31 \\
VIM        & 70.04 & 29.28 & 25.15 & 65.51 & 33.15 & 57.31 & 46.74 & $+$1.91 \\
fDBD       & 99.82 & 99.66 & 99.88 & 99.84 & 99.99 & 99.90 & 99.85 & $+$31.78 \\
NCI        & 87.78 & 59.66 & 75.01 & 86.61 & 16.97 & 81.91 & 67.99 & $+$29.31 \\
\midrule
HCM        & 44.14 & 19.87 & 32.83 & 46.49 & 27.38 & 46.11 & 36.47 & $-$27.29 \\
\bottomrule
\end{tabular}}
\end{table*}


\subsection{UCI Dataset Results}
\label{app:UCI}

To further validate the generality of our approach, we also report results on six UCI regression benchmarks: 
Concrete Strength, Energy Efficiency, Kin8nm, Power Plant, Wine Quality, and Yacht Hydrodynamics. 
In these experiments, Gaussian noise $\mathcal{N}(0,5I)$ is injected into the test inputs to induce distribution shifts, 
allowing us to systematically examine the robustness of uncertainty estimates. 
Specifically, we evaluate whether the proposed method can reliably detect such shifts and whether calibration performance 
is preserved under distributional perturbations. 
These additional experiments complement the main results by confirming that our method maintains reliable calibration 
and uncertainty estimation across diverse datasets and under noisy input conditions.

\begin{figure}[h]
    \centering
    \begin{subfigure}{0.32\textwidth}
        \centering
        \includegraphics[width=\linewidth, clip]{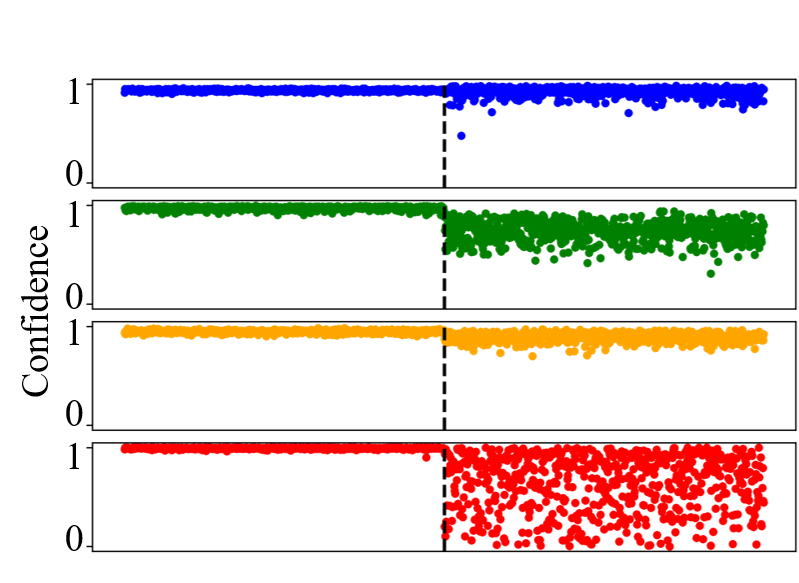}
        \caption{Wine quality}
    \end{subfigure}
    \begin{subfigure}{0.32\textwidth}
        \centering
        \includegraphics[width=\linewidth, clip]{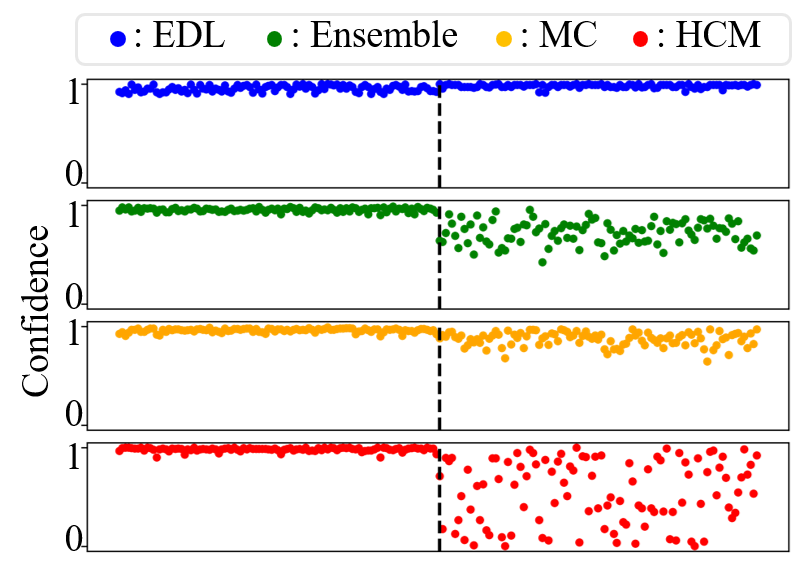}
        \caption{Concrete}
    \end{subfigure}
    \begin{subfigure}{0.32\textwidth}
        \centering
        \includegraphics[width=\linewidth, clip]{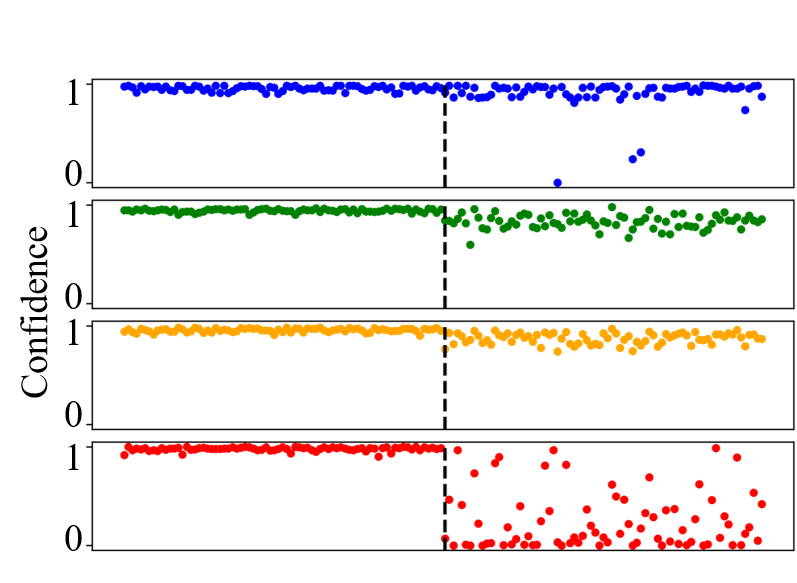}
        \caption{Energy}
    \end{subfigure}
    
    \begin{subfigure}{0.32\textwidth}
        \centering
        \includegraphics[width=\linewidth, clip]{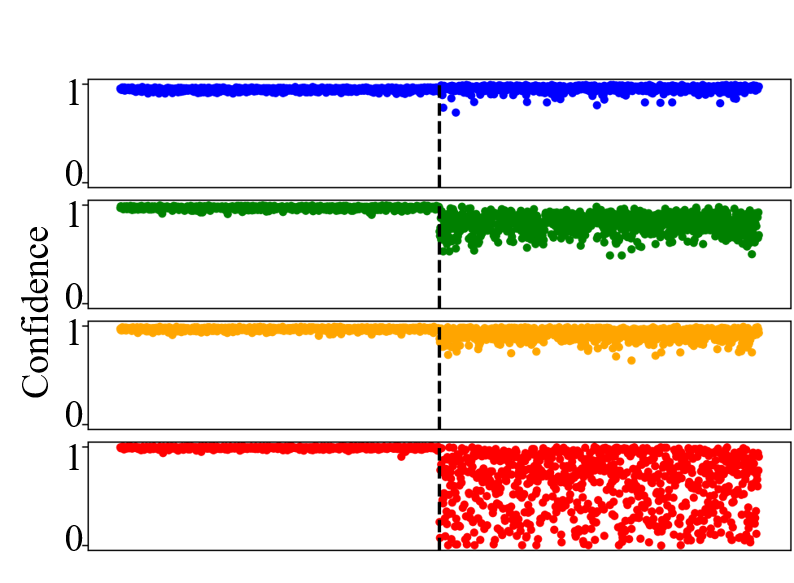}
        \caption{Kin8nm}
    \end{subfigure}
    \begin{subfigure}{0.32\textwidth}
        \centering
        \includegraphics[width=\linewidth, clip]{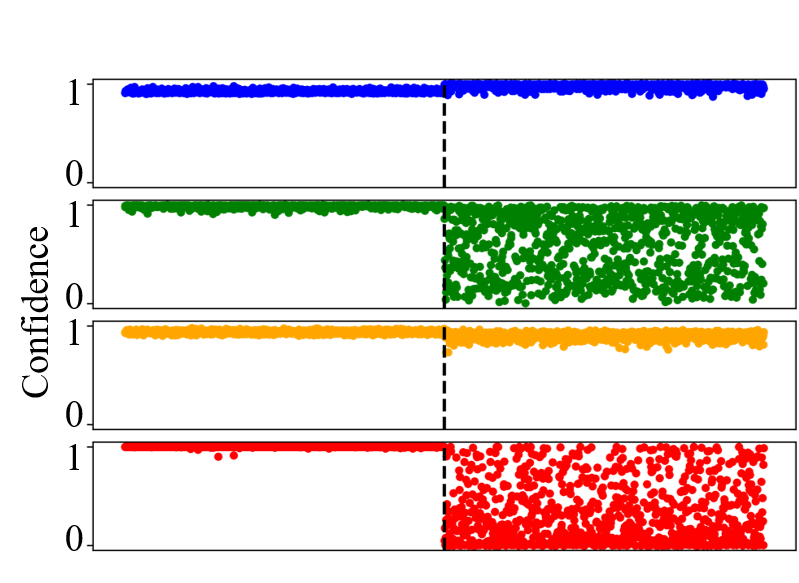}
        \caption{Power plant}
    \end{subfigure}
    \begin{subfigure}{0.32\textwidth}
        \centering
        \includegraphics[width=\linewidth, clip]{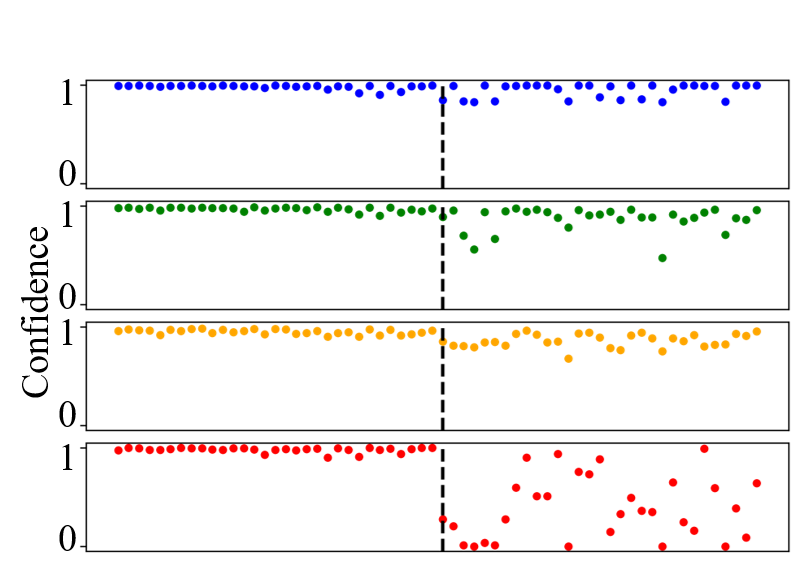}
        \caption{Yacht}
    \end{subfigure}
    
    \caption{Distribution shift detection on six UCI regression datasets 
    (Concrete Strength, Energy Efficiency, Kin8nm, Power Plant, Wine Quality, Yacht Hydrodynamics). 
    The black dashed line separates validation (left) and test (right) data. 
    At test time, Gaussian noise $\mathcal{N}(0,5I)$ was injected to induce 
    distribution shifts. 
    Confidence values were temperature-scaled on the validation set 
    (minimum confidence $\geq 0.9$). 
    HCM consistently shows a clear drop in confidence under noisy test inputs, 
    effectively capturing distribution shifts, 
    whereas Ensemble exhibits only partial sensitivity and 
    MC Dropout/EDL remain largely insensitive.}
    \label{fig:shift}
\end{figure}


Figure~\ref{fig:shift} presents the behavior of each method under distribution shifts on the six UCI regression datasets: 
Concrete Strength, Energy Efficiency, Kin8nm, Power Plant, Wine Quality, and Yacht Hydrodynamics. 
All methods were first temperature-scaled on the validation set so that the minimum confidence was at least 0.9. 
For evaluation, Gaussian noise $\mathcal{N}(0,5I)$ was injected into the test set inputs to induce distribution shifts, 
with the black dashed line in the figure denoting the boundary between validation and test data.

\begin{table*}[t]
\centering
\caption{Regression benchmarks under domain shift. 
We inject Gaussian noise $\mathcal{N}(0,5I)$ into the input features to simulate distributional shift. 
We report Monotonicity Index (MI, higher is better), 
Isotonic Calibration Error (ICE, lower is better), correlation coefficients (Pearson, Spearman, higher is better), 
E-AURC (lower is better), and overall mean RMSE (lower is better). 
Best values are in \textbf{bold}, second best are \underline{underlined}.}
\label{tab:regression-results}
\resizebox{\textwidth}{!}{
\begin{tabular}{llcccccc}
\toprule
\textbf{Dataset} & \textbf{Method} & \textbf{MI} & \textbf{ICE} & \textbf{Pearson} & \textbf{Spearman} & \textbf{E-AURC} & \textbf{RMSE} \\
\midrule
\multirow{4}{*}{Concrete} 
& EDL       & \underline{0.636} & 38.93 & 0.196 & 0.256 & 15.86 & \textbf{43.50} \\
& MC Dropout& \textbf{0.867} & \textbf{29.14} & \textbf{0.761} & \textbf{0.785} & \textbf{7.46} & 75.03 \\
& Ensemble  & 0.533 & 41.17 & 0.214 & 0.229 & 25.38 & \underline{57.79} \\
& HCM       & 0.611 & \underline{37.42} & \underline{0.760} & \underline{0.667} & \underline{13.47} & 76.33 \\
\midrule
\multirow{4}{*}{Energy} 
& EDL       & 0.429 & 30.01 & 0.498 & 0.630 & 9.95 & 45.82 \\
& MC Dropout& \underline{0.786} & \underline{17.23} & \underline{0.752} & \underline{0.773} & \underline{5.12} & 45.89 \\
& Ensemble  & 0.300 & 31.29 & -0.056 & -0.094 & 22.64 & \underline{33.35} \\
& HCM       & \textbf{0.842} & \textbf{8.32} & \textbf{0.844} & \textbf{0.834} & \textbf{3.08} & \textbf{27.44} \\
\midrule
\multirow{4}{*}{Kin8nm} 
& EDL       & 0.500 & 1.07 & -0.082 & -0.117 & 0.89 & 1.26 \\
& MC Dropout& \underline{0.722} & \textbf{0.31} & \underline{0.260} & 0.218 & \textbf{0.17} & \textbf{0.40} \\
& Ensemble  & 0.556 & 0.68 & 0.217 & \underline{0.226} & 0.36 & 0.86 \\
& HCM       & \textbf{0.842} & \underline{0.43} & \textbf{0.642} & \textbf{0.525} & \underline{0.18} & \underline{0.68} \\
\midrule
\multirow{4}{*}{Power Plant} 
& EDL       & 0.444 & \textbf{39.94} & -0.050 & -0.067 & 31.36 & \textbf{48.57} \\
& MC Dropout& \underline{0.929} & 102.64 & \underline{0.650} & 0.651 & 38.80 & 214.07 \\
& Ensemble  & 0.526 & 60.47 & 0.481 & \underline{0.711} & \underline{15.40} & \underline{119.51} \\
& HCM       & \textbf{1.000} & \underline{57.82} & \textbf{0.838} & \textbf{0.969} & \textbf{7.18} & 387.25 \\
\midrule
\multirow{4}{*}{Wine Quality} 
& EDL       & 0.563 & 2.50 & 0.218 & 0.181 & 1.58 & 3.83 \\
& MC Dropout& \textbf{0.750} & \textbf{1.45} & \underline{0.672} & \underline{0.621} & \textbf{0.50} & \textbf{2.47} \\
& Ensemble  & 0.667 & 2.48 & 0.299 & 0.288 & 1.41 & \underline{3.73} \\
& HCM       & \underline{0.737} & \underline{2.17} & \textbf{0.734} & \textbf{0.779} & \underline{0.73} & 5.36 \\
\midrule
\multirow{4}{*}{Yacht} 
& EDL       & \underline{0.750} & \underline{0.51} & \underline{0.701} & \textbf{0.908} & \underline{0.11} & 9.55 \\
& MC Dropout& 0.615 & 1.38 & 0.478 & 0.428 & 0.56 & \textbf{2.02} \\
& Ensemble  & 0.500 & \textbf{0.09} & 0.086 & 0.175 & \textbf{0.04} & \underline{2.38} \\
& HCM       & \textbf{0.867} & 1.48 & \textbf{0.879} & \underline{0.855} & 0.39 & 13.91 \\
\bottomrule
\end{tabular}}
\end{table*}

Across all six datasets, HCM consistently exhibited a clear drop in confidence once the distribution shift occurred. 
This demonstrates that HCM not only provides well-calibrated uncertainty estimates on clean data but also responds sensitively 
to shifts, thereby capturing distributional changes effectively. 
By contrast, Ensemble showed partial sensitivity, with confidence decreases observed only in datasets such as Concrete Strength, 
Energy Efficiency, and Kin8nm, while remaining largely unchanged in the others. 
MC Dropout and EDL performed the worst: despite the added noise, their confidence values exhibited little to no variation, 
indicating poor capability to detect shifts.

These results confirm that HCM goes beyond achieving reliable calibration under standard conditions. 
It also generalizes across diverse regression benchmarks by effectively detecting distribution shifts, highlighting its utility 
in safety-critical applications where robustness to input perturbations is essential.

Table~\ref{tab:regression-results} reports detailed results on six UCI regression datasets 
(Concrete Strength, Energy Efficiency, Kin8nm, Power Plant, Wine Quality, and Yacht Hydrodynamics) 
under domain shifts induced by Gaussian noise $\mathcal{N}(0,5I)$. 
The evaluation covers calibration quality (Monotonicity Index, Isotonic Calibration Error, correlation coefficients, and E-AURC) 
as well as predictive accuracy (RMSE). 

Across most datasets, HCM consistently delivers the strongest calibration performance. 
On \textit{Energy} and \textit{Kin8nm}, HCM clearly dominates, achieving the highest MI and correlation scores 
while yielding the lowest ICE and E-AURC. 
On \textit{Power Plant}, HCM achieves perfect monotonicity (MI = 1.0) with exceptionally high correlation coefficients, 
demonstrating its ability to capture uncertainty under severe distributional shifts. 
Although HCM sometimes shows relatively larger RMSE values (e.g., \textit{Concrete}, \textit{Yacht}), 
this behavior is expected given that HCM explicitly prioritizes calibration and shift sensitivity over raw error minimization. 

By comparison, MC Dropout often performs well in terms of RMSE and occasionally ICE 
(for instance, on \textit{Concrete} and \textit{Wine Quality}), 
but its calibration metrics are inconsistent across datasets. 
Ensemble methods sometimes reduce RMSE (notably on \textit{Concrete} and \textit{Wine Quality}) 
but generally fail to provide robust calibration, 
while EDL struggles both in correlation and in calibration quality. 

Taken together, these results confirm that HCM generalizes reliably across diverse regression tasks. 
It consistently provides well-calibrated uncertainty estimates, 
responds sensitively to distribution shifts, 
and maintains competitive predictive accuracy, 
making it particularly suitable for safety-critical applications where robustness under input perturbations is essential.

\subsection{Ablation on $\lambda$ on UCI dataset}
\label{app:ablation}
To examine the sensitivity of the Lagrange multiplier $\lambda$, 
we conduct an ablation study on six UCI regression datasets. 
In this experiment, we vary $\lambda \in \{0, 1, 3, 5\}$ and 
report the validation MAE for each configuration. This setup allows us 
to isolate the effect of $\lambda$ on predictive performance and to 
identify whether the model exhibits meaningful sensitivity to the choice 
of this parameter.

Table~\ref{tab:uci-lambda-ablation} shows that across the six UCI regression datasets, the model exhibits only weak sensitivity to the choice of $\lambda$ in several cases. In particular, the Wine, Yacht, and Concrete datasets display only minor fluctuations in validation MAE as $\lambda$ varies. This pattern suggests that the unit-norm constraint is already well satisfied during training for these tasks, resulting in a small contribution from the constraint term and thus reducing the overall influence of $\lambda$.

For the Energy and Power Plant datasets, the variation across different $\lambda$ values is larger, but no consistent monotonic trend is present. Finally, the Kin8nm dataset shows a notable degradation when $\lambda = 5$, where the validation MAE becomes significantly higher than for the other settings. Inspecting the training loss reveals that, under $\lambda=5$, the constraint term overwhelms the optimization and prevents the magnitude branch $R$ from learning properly, which explains the abrupt increase in error.

Overall, the results suggest that $\lambda$ should not be viewed as a sensitive hyperparameter. In most cases, $\lambda = 0$ produces the best or one of the best results, and we therefore recommend starting from $\lambda = 0$ and increasing it gradually only if necessary. Larger values of $\lambda$ may lead to instability, especially in datasets where the unit-norm constraint becomes overly restrictive.

\begin{table}[t]
\centering
\caption{Validation MAE across $\lambda \in \{0,1,3,5\}$ on six UCI regression datasets.}
\label{tab:uci-lambda-ablation}
\begin{tabular}{lcccc}
\toprule
\textbf{Dataset} & $\boldsymbol{\lambda=0}$ & $\boldsymbol{\lambda=1}$ & $\boldsymbol{\lambda=3}$ & $\boldsymbol{\lambda=5}$ \\
\midrule
Wine         & 0.5539 & 0.5502 & 0.5522 & 0.5538 \\
Energy       & 1.8183 & 2.7145 & 2.3795 & 2.0584 \\
Yacht        & 0.0527 & 0.0536 & 0.0479 & 0.0601 \\
Concrete     & 4.3095 & 4.3030 & 4.5332 & 4.6356 \\
Kin8nm       & 0.0658 & 0.0641 & 0.0709 & \textbf{1.3765} \\
Power Plant  & 3.4741 & 4.4440 & 5.1403 & 4.5781 \\
\bottomrule
\end{tabular}
\end{table}

\subsection{Additional OOD Detection Results on Text Classification}

To examine the applicability of HCM beyond the domains considered in the main paper, we include supplementary OOD detection experiments on text classification. Using AG News as the in-distribution dataset and several widely used NLP corpora as OOD sources, we evaluate whether the hyperspherical decomposition consistently produces meaningful uncertainty scores in language tasks.

\begin{table*}[t]
\centering
\caption{AUROC for OOD detection on text classification using AG News as the in-distribution dataset.
Lower values are better. Best values are in \textbf{bold}, second best are \underline{underlined}.}
\label{tab:text-ood-auroc}
\begin{tabular}{lcccc}
\toprule
\textbf{Method} & \textbf{Yelp} & \textbf{IMDB} & \textbf{Emotion} & \textbf{DBPedia} \\
\midrule
HCM  & 0.9229 & \underline{0.9339} & 0.9716 & 0.9184 \\
MSP  & 0.9019 & 0.8748 & \underline{0.9720} & 0.8813 \\
HUQ  & \underline{0.9343} & 0.9235 & \textbf{0.9833} & \underline{0.9353} \\
NUQ  & \textbf{0.9695} & \textbf{0.9720} & 0.9708 & \textbf{0.9686} \\
RAU  & 0.9029 & 0.8786 & 0.9453 & 0.8229 \\
\bottomrule
\end{tabular}
\end{table*}

\begin{table*}[t]
\centering
\caption{FPR@95TPR for OOD detection on text classification using AG News as the in-distribution dataset.
Lower values are better. Best values are in \textbf{bold}, second best are \underline{underlined}.}
\label{tab:text-ood-fpr95}
\begin{tabular}{lcccc}
\toprule
\textbf{Method} & \textbf{Yelp} & \textbf{IMDB} & \textbf{Emotion} & \textbf{DBPedia} \\
\midrule

HCM  & 0.3388              & 0.3129              & \underline{0.1297} & 0.3309 \\
MSP  & 0.3262              & 0.3460              & 0.1369             & 0.4374 \\
HUQ  & \underline{0.1683}  & \underline{0.1636}  & \textbf{0.0839}    & \underline{0.1620} \\
NUQ  & \textbf{0.0598}     & \textbf{0.0523}     & 0.0532             & \textbf{0.0540} \\
RAU  & 0.3297              & 0.3532              & 0.2588             & 0.6309 \\

\bottomrule
\end{tabular}
\end{table*}

Across four OOD datasets, HCM shows consistently competitive AUROC performance relative to standard confidence baselines such as MSP, and clearly improves over RAU~\citep{gong2022confidence}. 
Although similarity-based methods (HUQ~\citep{vazhentsev2023hybrid} and NUQ~\citep{kotelevskii2022nonparametric}) achieve the strongest AUROC and FPR@95TPR scores—as expected from their density- and prototype-driven formulations—HCM remains close in performance despite requiring no sampling, no density estimation, and only a lightweight modification to the classifier head. 
Notably, HCM outperforms MSP on all datasets in AUROC, indicating that the hyperspherical deviation signal captures meaningful semantic mismatch even in the language domain. 
The performance on \textit{Emotion} further suggests that HCM is robust under strong semantic shift.
Overall, these results confirm that the proposed decomposition generalizes naturally to text classification without architectural tuning or task-specific adjustments.

\subsection{Additional 1D Regression}
To illustrate how hyperspherical decomposition behaves in the scalar regression setting, 
we evaluate HCM on four controlled 1D tasks: heteroscedastic Gaussian noise, heteroscedastic Laplace noise, 
bimodal Gaussian mixture noise, and multi-valued regression ($y=\pm\sqrt{x}$).  
The same decomposition is applied by embedding each scalar target as $(y,y)$, enabling magnitude–direction factorization.
\begin{figure}[t]
\centering
\begin{subfigure}{0.48\linewidth}
    \centering
    \includegraphics[width=\linewidth]{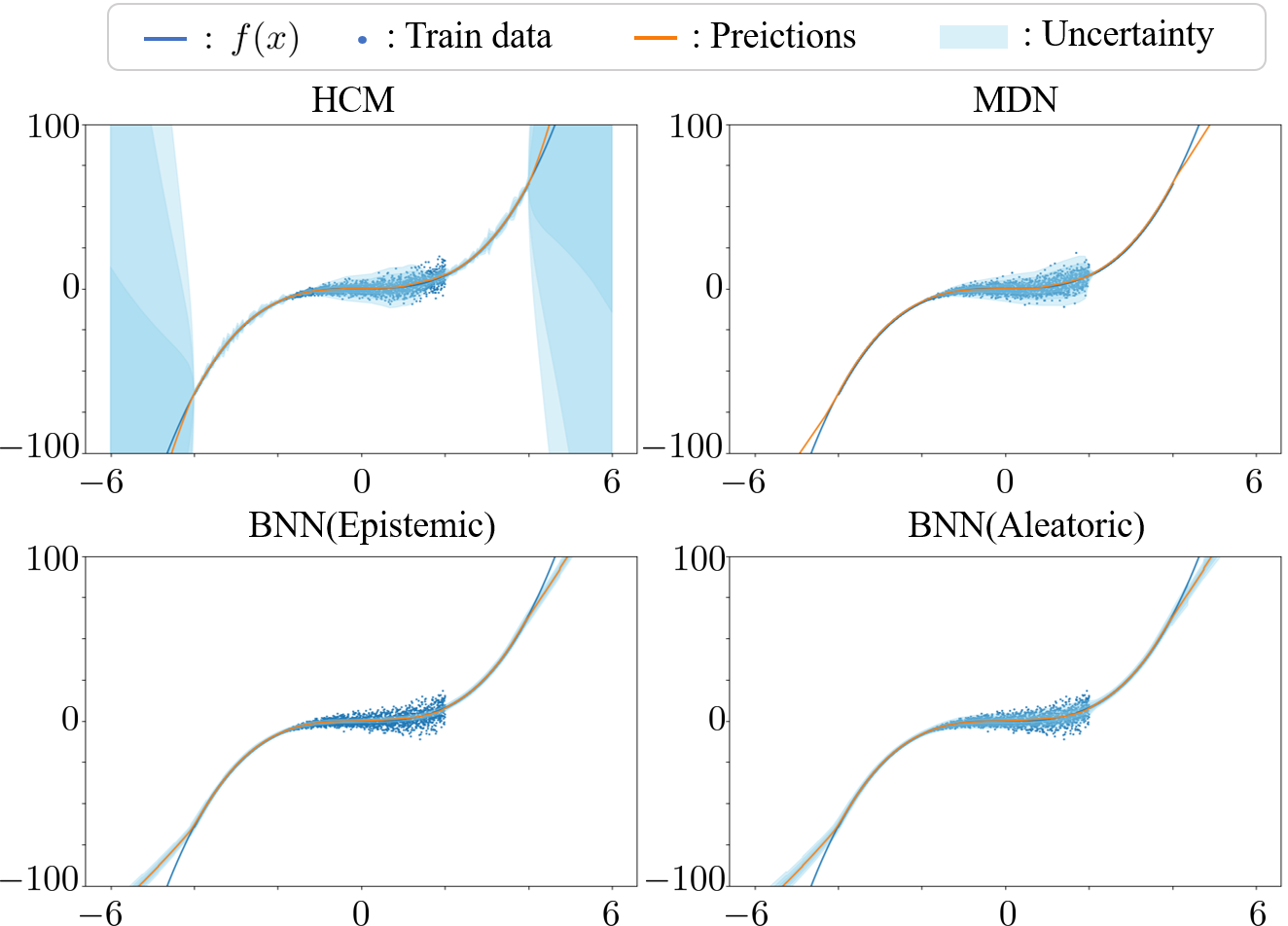}
    \caption{Gaussian}
\end{subfigure}
\begin{subfigure}{0.48\linewidth}
    \centering
    \includegraphics[width=\linewidth]{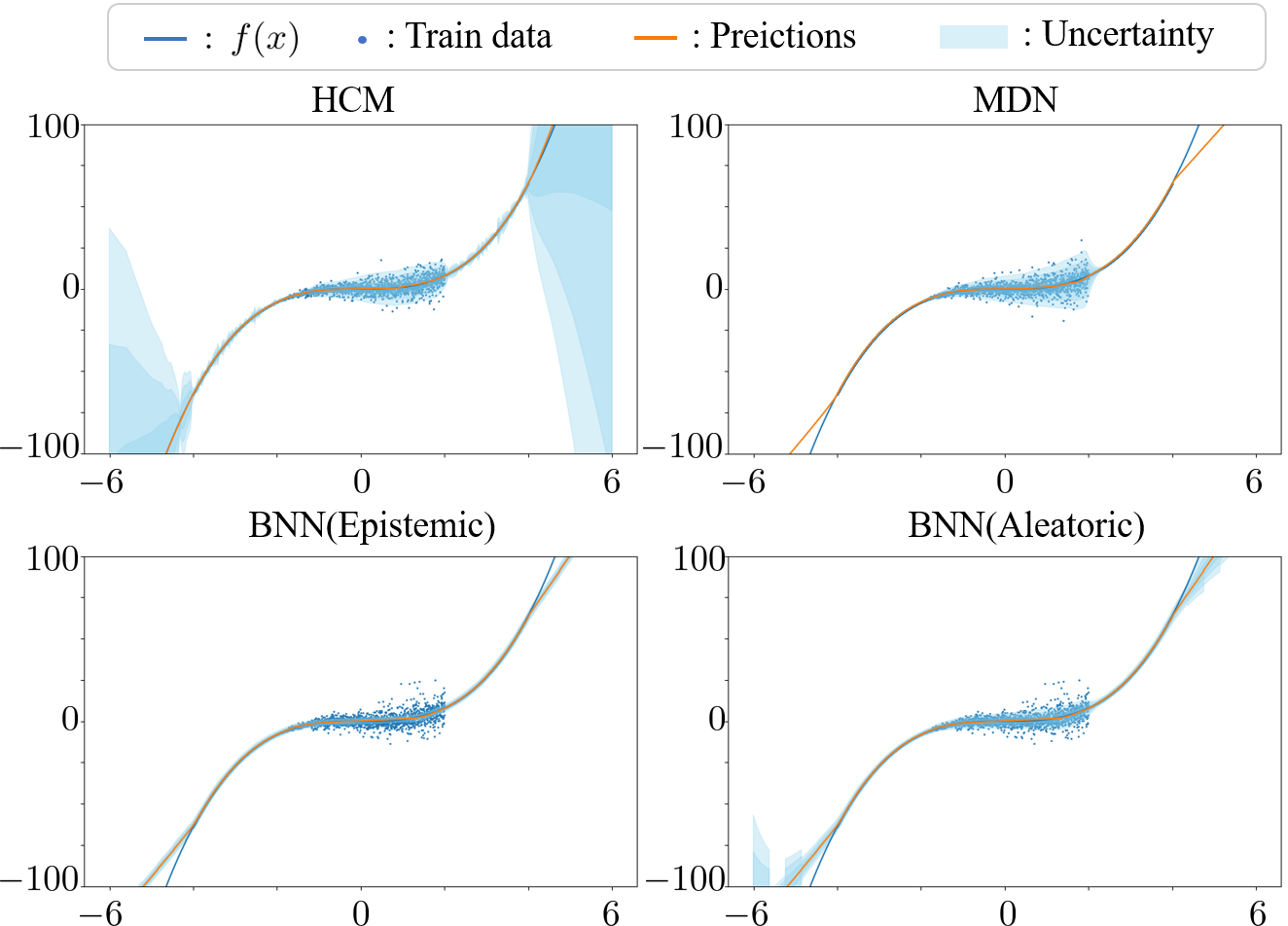}
    \caption{Laplace}
\end{subfigure}

\begin{subfigure}{0.48\linewidth}
    \centering
    \includegraphics[width=\linewidth]{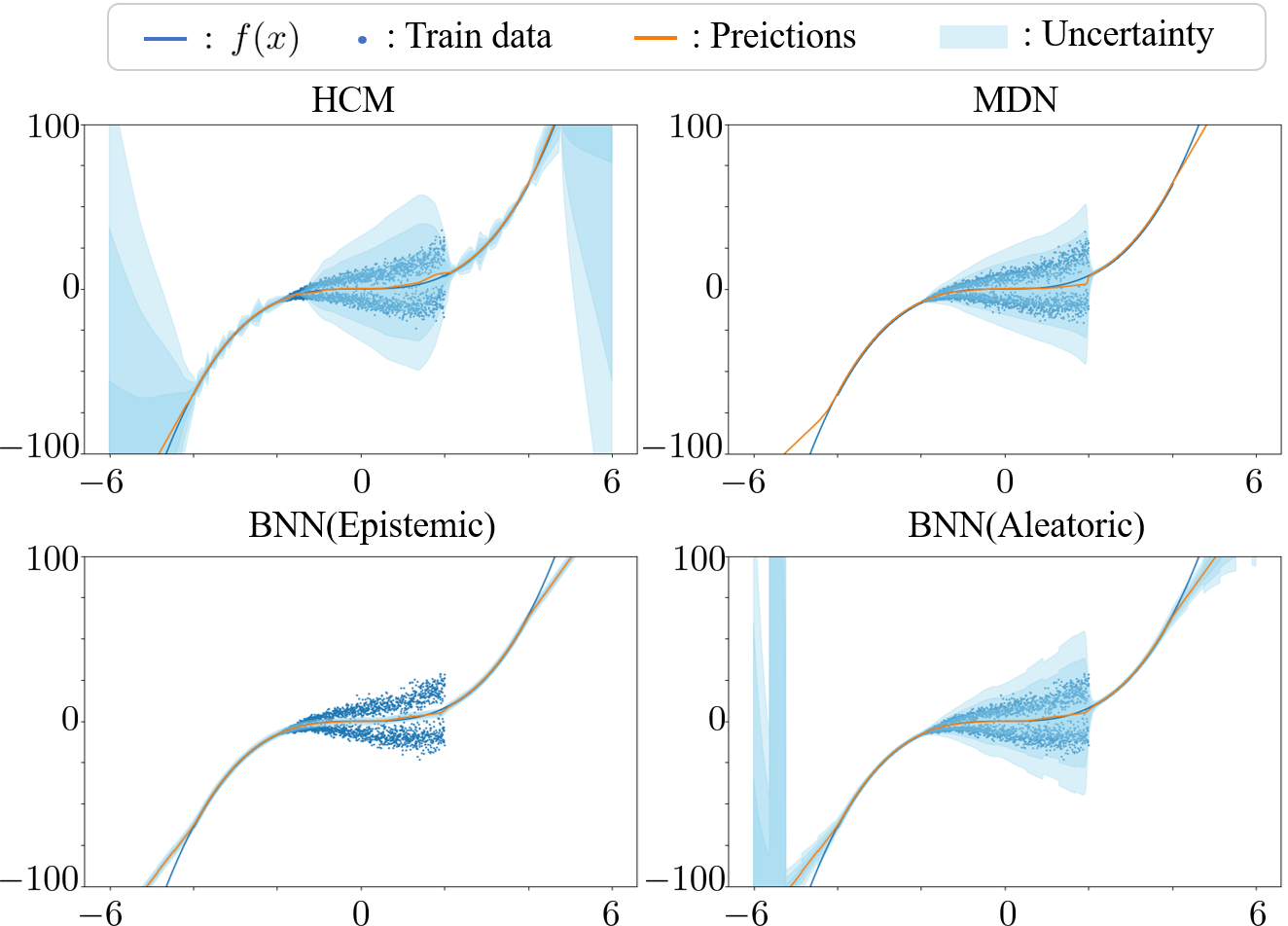}
    \caption{Bimodal}
\end{subfigure}
\begin{subfigure}{0.48\linewidth}
    \centering
    \includegraphics[width=\linewidth]{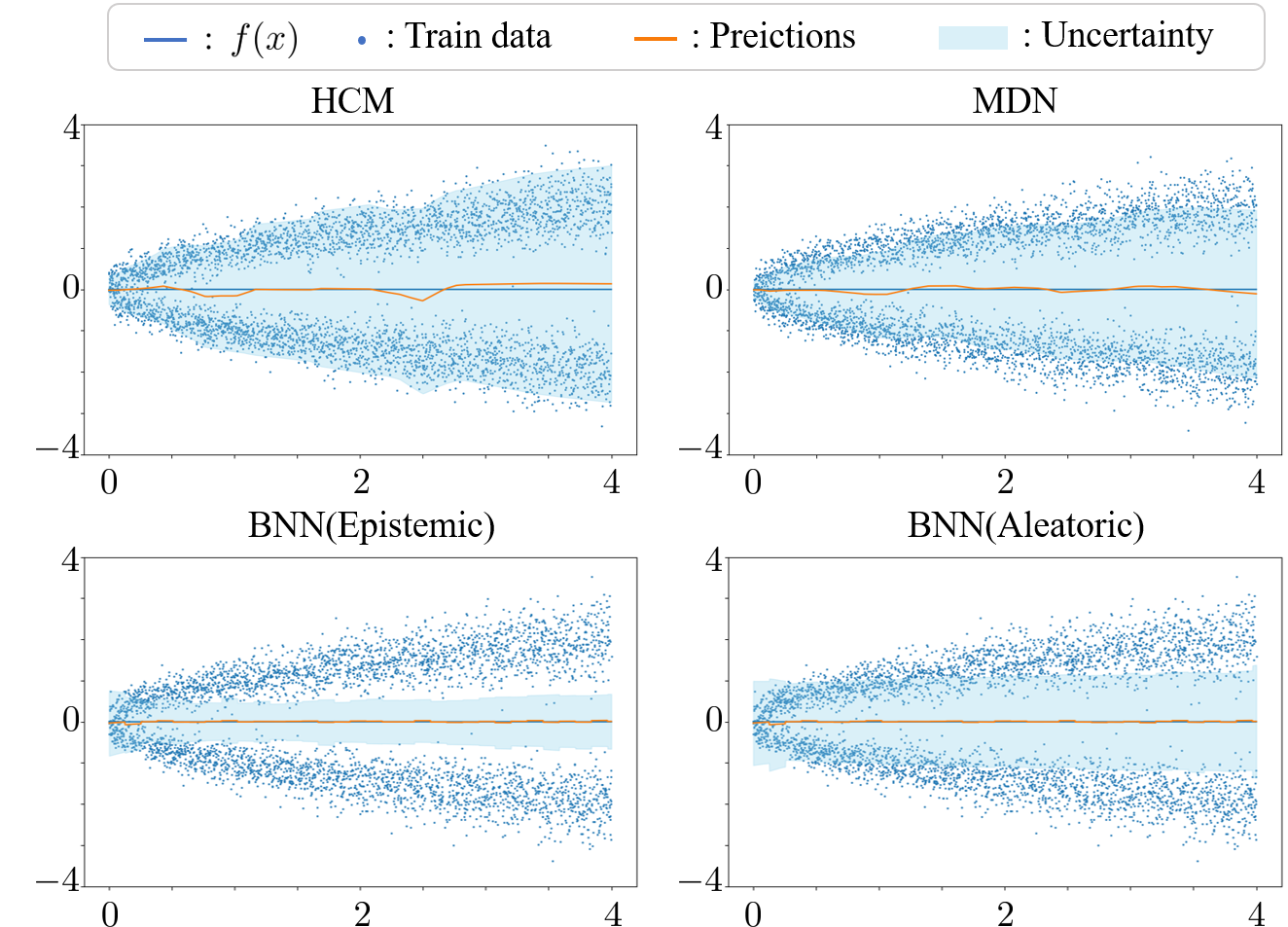}
    \caption{Multivalued}
\end{subfigure}

\caption{
Additional 1D regression results under four noise structures 
(Gaussian, Laplace, bimodal, and multi-valued). 
Each panel compares HCM with MDN, BNN–epistemic, and BNN–aleatoric.
}
\label{fig:1d_gaussian}
\end{figure}

Figure~\ref{fig:1d_gaussian} summarizes the 1D regression results across four noise settings. Across all noise types, HCM consistently reflects both the local noise structure and the global ambiguity of the regression task.  
Under Gaussian and Laplace heteroscedastic noise, HCM increases its uncertainty proportionally to the noise magnitude, successfully capturing aleatoric variability.  
In the bimodal setting, uncertainty grows with the separation between mixture components, and in the multi-valued task ($y=\pm\sqrt{x}$), HCM assigns larger uncertainty as the two solution branches move farther apart, reflecting the increasing discrepancy between the possible targets.  Mixture Density Networks (MDN) capture aleatoric noise to a reasonable extent but do not express epistemic uncertainty.
Bayesian Neural Networks (BNN) partially capture aleatoric variation but show weak epistemic sensitivity and overall unstable uncertainty estimates, which is consistent with the known difficulty of obtaining reliable uncertainty from single-network Bayesian approximations.

Overall, HCM is the only method among the compared baselines that is simultaneously sensitive to aleatoric noise and epistemic uncertainty across all noise regimes.

\subsection*{Use of Large Language Models (LLMs)}

In preparing this work, we made limited use of a Large Language Model (ChatGPT, OpenAI GPT-5). Specifically:
\begin{itemize}
    \item \textbf{Writing Assistance:} The LLM was used to improve clarity, grammar, and conciseness of English text. Draft paragraphs written by the authors were revised with its help.
    \item \textbf{Literature Support:} The LLM was occasionally used to retrieve references and to generate candidate Bib\TeX{} entries, which were subsequently verified by the authors.
    \item \textbf{Idea Organization:} The LLM assisted in structuring some sections of the paper (e.g., experimental description, figure captions), but all research ideas, methods, and conclusions were conceived and validated by the authors.
\end{itemize}

The authors have carefully reviewed, verified, and take full responsibility for all contents of the paper, including any parts drafted with the assistance of the LLM.

\end{document}

%% file: math_commands.tex
\usepackage{amsmath,amsfonts,bm}

\def\eqref#1{equation~\ref{#1}}

\def\1{\bm{1}}

\DeclareMathAlphabet{\mathsfit}{\encodingdefault}{\sfdefault}{m}{sl}
\SetMathAlphabet{\mathsfit}{bold}{\encodingdefault}{\sfdefault}{bx}{n}



%% file: iclr2026_conference.bib
@article{amini2020deep,
  author    = {Amini, Alexander and Schwarting, Wilko and Soleimany, Ava and Rus, Daniela},
  title     = {Deep evidential regression},
  journal   = {Advances in Neural Information Processing Systems},
  volume    = {33},
  pages     = {14927--14937},
  year      = {2020}
}

@inproceedings{cimpoi2014describing,
  author    = {Cimpoi, Mircea and Maji, Subhransu and Kokkinos, Iasonas and Mohamed, Sammy and Vedaldi, Andrea},
  title     = {Describing textures in the wild},
  booktitle = {Proceedings of the IEEE/CVF Conference on Computer Vision and Pattern Recognition (CVPR)},
  pages     = {3606--3613},
  year      = {2014}
}

@inproceedings{deng2009imagenet,
  author    = {Deng, Jia and Dong, Wei and Socher, Richard and Li, Li-Jia and Li, Kai and Fei-Fei, Li},
  title     = {ImageNet: A large-scale hierarchical image database},
  booktitle = {Proceedings of the IEEE/CVF Conference on Computer Vision and Pattern Recognition (CVPR)},
  pages     = {248--255},
  year      = {2009}
}

@inproceedings{gal2016dropout,
  author    = {Gal, Yarin and Ghahramani, Zoubin},
  title     = {Dropout as a Bayesian approximation: Representing model uncertainty in deep learning},
  booktitle = {Proceedings of the International Conference on Machine Learning (ICML)},
  pages     = {1050--1059},
  year      = {2016},
  organization = {PMLR}
}

@inproceedings{he2016deep,
  title     = {Deep Residual Learning for Image Recognition},
  author    = {He, Kaiming and Zhang, Xiangyu and Ren, Shaoqing and Sun, Jian},
  booktitle = {Proceedings of the IEEE/CVF Conference on Computer Vision and Pattern Recognition (CVPR)},
  pages     = {770--778},
  year      = {2016}
}

@article{hendrycks2016baseline,
  author    = {Hendrycks, Dan and Gimpel, Kevin},
  title     = {A baseline for detecting misclassified and out-of-distribution examples in neural networks},
  journal   = {arXiv preprint arXiv:1610.02136},
  year      = {2016}
}

@article{khosravi2011comprehensive,
  author    = {Khosravi, Abbas and Nahavandi, Saeid and Creighton, Doug and Atiya, Amir F.},
  title     = {Comprehensive review of neural network-based prediction intervals and new advances},
  journal   = {IEEE Transactions on Neural Networks},
  volume    = {22},
  number    = {9},
  pages     = {1341--1356},
  year      = {2011},
  publisher = {IEEE}
}

@book{koenker2005quantile,
  author    = {Koenker, Roger},
  title     = {Quantile regression},
  publisher = {Cambridge University Press},
  volume    = {38},
  year      = {2005}
}

@techreport{krizhevsky2009learning,
  author    = {Krizhevsky, Alex},
  title     = {Learning Multiple Layers of Features from Tiny Images},
  institution = {University of Toronto},
  year      = {2009}
}

@article{lakshminarayanan2017simple,
  author    = {Lakshminarayanan, Balaji and Pritzel, Alexander and Blundell, Charles},
  title     = {Simple and scalable predictive uncertainty estimation using deep ensembles},
  journal   = {Advances in Neural Information Processing Systems},
  volume    = {30},
  year      = {2017}
}

@article{lecun1998gradient,
  author    = {LeCun, Yann and Bottou, L{\'e}on and Bengio, Yoshua and Haffner, Patrick},
  title     = {Gradient-based learning applied to document recognition},
  journal   = {Proceedings of the IEEE},
  volume    = {86},
  number    = {11},
  pages     = {2278--2324},
  year      = {1998}
}

@article{lee2018simple,
  author    = {Lee, Kimin and Lee, Kibok and Lee, Honglak and Shin, Jinwoo},
  title     = {A simple unified framework for detecting out-of-distribution samples and adversarial attacks},
  journal   = {Advances in Neural Information Processing Systems},
  volume    = {31},
  year      = {2018}
}

@article{liang2017enhancing,
  author    = {Liang, Shiyu and Li, Yixuan and Srikant, Rayadurgam},
  title     = {Enhancing the reliability of out-of-distribution image detection in neural networks},
  journal   = {arXiv preprint arXiv:1706.02690},
  year      = {2017}
}

@article{liu2020energy,
  author    = {Liu, Weitang and Wang, Xiaoyun and Owens, John and Li, Yixuan},
  title     = {Energy-based out-of-distribution detection},
  journal   = {Advances in Neural Information Processing Systems},
  volume    = {33},
  pages     = {21464--21475},
  year      = {2020}
}

@article{liu2023fast,
  author    = {Liu, Litian and Qin, Yao},
  title     = {Fast decision boundary based out-of-distribution detector},
  journal   = {arXiv preprint arXiv:2312.11536},
  year      = {2023}
}

@inproceedings{liu2025detecting,
  author    = {Liu, Litian and Qin, Yao},
  title     = {Detecting out-of-distribution through the lens of neural collapse},
  booktitle = {Proceedings of the IEEE/CVF Conference on Computer Vision and Pattern Recognition (CVPR)},
  pages     = {15424--15433},
  year      = {2025}
}

@article{malinin2018predictive,
  author    = {Malinin, Andrey and Gales, Mark},
  title     = {Predictive uncertainty estimation via prior networks},
  journal   = {Advances in Neural Information Processing Systems},
  volume    = {31},
  year      = {2018}
}

@inproceedings{mukhoti2023deep,
  author    = {Mukhoti, Jishnu and Kirsch, Andreas and Van Amersfoort, Joost and Torr, Philip HS and Gal, Yarin},
  title     = {Deep deterministic uncertainty: A new simple baseline},
  booktitle = {Proceedings of the IEEE/CVF Conference on Computer Vision and Pattern Recognition (CVPR)},
  pages     = {24384--24394},
  year      = {2023}
}

@inproceedings{netzer2011reading,
  author    = {Netzer, Yuval and Wang, Tao and Coates, Adam and Bissacco, Alessandro and Wu, Bo and Ng, Andrew Y.},
  title     = {Reading digits in natural images with unsupervised feature learning},
  booktitle = {Proceedings of the NIPS Workshop on Deep Learning and Unsupervised Feature Learning},
  year      = {2011}
}

@article{pedregosa2011scikit,
  author    = {Pedregosa, Fabian and Varoquaux, Ga{\"e}l and Gramfort, Alexandre and Michel, Vincent and Thirion, Bertrand and Grisel, Olivier and Blondel, Mathieu and Prettenhofer, Peter and Weiss, Ron and Dubourg, Vincent and Vanderplas, Jake and Passos, Alexandre and Cournapeau, David and Brucher, Matthieu and Perrot, Matthieu and Duchesnay, {\'E}douard},
  title     = {Scikit-learn: Machine learning in {P}ython},
  journal   = {Journal of Machine Learning Research},
  volume    = {12},
  pages     = {2825--2830},
  year      = {2011}
}

@article{shafer2008tutorial,
  author    = {Shafer, Glenn and Vovk, Vladimir},
  title     = {A tutorial on conformal prediction},
  journal   = {Journal of Machine Learning Research},
  volume    = {9},
  number    = {3},
  year      = {2008}
}

@inproceedings{sun2022out,
  author    = {Sun, Yiyou and Ming, Yifei and Zhu, Xiaojin and Li, Yixuan},
  title     = {Out-of-distribution detection with deep nearest neighbors},
  booktitle = {Proceedings of the International Conference on Machine Learning (ICML)},
  pages     = {20827--20840},
  year      = {2022},
  organization = {PMLR}
}

@inproceedings{van2020uncertainty,
  author    = {Van Amersfoort, Joost and Smith, Lewis and Teh, Yee Whye and Gal, Yarin},
  title     = {Uncertainty estimation using a single deep deterministic neural network},
  booktitle = {Proceedings of the International Conference on Machine Learning (ICML)},
  pages     = {9690--9700},
  year      = {2020},
  organization = {PMLR}
}

@book{vovk2005algorithmic,
  author    = {Vovk, Vladimir and Gammerman, Alexander and Shafer, Glenn},
  title     = {Algorithmic learning in a random world},
  publisher = {Springer},
  year      = {2005}
}

@inproceedings{wang2022vim,
  author    = {Wang, Haoqi and Li, Zhizhong and Feng, Litong and Zhang, Wayne},
  title     = {Vim: Out-of-distribution with virtual-logit matching},
  booktitle = {Proceedings of the IEEE/CVF Conference on Computer Vision and Pattern Recognition (CVPR)},
  pages     = {4921--4930},
  year      = {2022}
}

@article{zhang2023openood,
  author    = {Zhang, Jingyang and Yang, Jingkang and Wang, Pengyun and Wang, Haoqi and Lin, Yueqian and Zhang, Haoran and Sun, Yiyou and Du, Xuefeng and Zhou, Kaiyang and Zhang, Wayne and Li, Yixuan and Liu, Ziwei and Chen, Yiran and Hai, Li},
  title     = {OpenOOD v1.5: Enhanced Benchmark for Out-of-Distribution Detection},
  journal   = {arXiv preprint arXiv:2306.09301},
  year      = {2023}
}

@article{zhou2018places,
  author    = {Zhou, Bolei and Lapedriza, Agata and Khosla, Aditya and Oliva, Aude and Torralba, Antonio},
  title     = {Places: A 10 million Image Database for Scene Recognition},
  journal   = {IEEE Transactions on Pattern Analysis and Machine Intelligence},
  volume    = {40},
  number    = {6},
  pages     = {1452--1464},
  year      = {2018}
}

@inproceedings{SilbermanECCV12,
  author    = {Nathan Silberman and Derek Hoiem and Pushmeet Kohli and Rob Fergus},
  title     = {Indoor Segmentation and Support Inference from RGBD Images},
  booktitle = {Proceedings of the European Conference on Computer Vision (ECCV)},
  year      = {2012}
}

@inproceedings{ronneberger2015unet,
  author    = {Olaf Ronneberger and Philipp Fischer and Thomas Brox},
  title     = {U-Net: Convolutional Networks for Biomedical Image Segmentation},
  booktitle = {Proceedings of the International Conference on Medical Image Computing and Computer-Assisted Intervention (MICCAI)},
  series    = {Lecture Notes in Computer Science},
  volume    = {9351},
  pages     = {234--241},
  publisher = {Springer},
  year      = {2015},
  doi       = {10.1007/978-3-319-24574-4_28}
}

@inproceedings{guo2017calibration,
  author    = {Chuan Guo and Geoff Pleiss and Yu Sun and Kilian Q. Weinberger},
  title     = {On Calibration of Modern Neural Networks},
  booktitle = {Proceedings of the International Conference on Machine Learning (ICML)},
  pages     = {1321--1330},
  year      = {2017},
  organization = {PMLR}
}

@article{esteva2019guide,
  author    = {Esteva, Andre and Robicquet, Alexandre and Ramsundar, Bharath and Kuleshov, Volodymyr and DePristo, Mark and Chou, Katherine and Cui, Claire and Corrado, Greg and Thrun, Sebastian and Dean, Jeff},
  title     = {A Guide to Deep Learning in Healthcare},
  journal   = {Nature Medicine},
  volume    = {25},
  pages     = {24--29},
  year      = {2019}
}

@article{kendall2017uncertainties,
  author    = {Kendall, Alex and Gal, Yarin},
  title     = {What Uncertainties Do We Need in Bayesian Deep Learning for Computer Vision?},
  journal   = {Advances in Neural Information Processing Systems},
  volume    = {30},
  year      = {2017}
}

@article{gneiting2007probabilistic,
  title={Probabilistic forecasts, calibration and sharpness},
  author={Gneiting, Tilmann and Balabdaoui, Fadoua and Raftery, Adrian E.},
  journal={Journal of the Royal Statistical Society: Series B (Statistical Methodology)},
  volume={69},
  number={2},
  pages={243--268},
  year={2007},
  publisher={Wiley Online Library},
  doi={10.1111/j.1467-9868.2007.00587.x}
}

@inproceedings{ahmed2024tiny,
  title        = {Tiny Deep Ensemble: Uncertainty Estimation in Edge AI Accelerators via Ensembling Normalization Layers with Shared Weights},
  author       = {Ahmed, Soyed Tuhin and Hefenbrock, Michael and Tahoori, Mehdi B.},
  booktitle    = {Proceedings of the 43rd IEEE/ACM International Conference on Computer‐Aided Design (ICCAD)},
  pages        = {1--9},
  year         = {2024},
  publisher    = {IEEE},
  doi          = {10.1145/3676536.3676804}
}

@article{gustafsson2023reliable,
  title={How Reliable is Your Regression Model's Uncertainty Under Real-World Distribution Shifts?},
  author={Gustafsson, Fredrik K and Danelljan, Martin and Sch{\"o}n, Thomas B},
  journal={arXiv preprint arXiv:2302.03679},
  year={2023}
}

@misc{OpenML2013,
  author       = {Joaquin Vanschoren and Jan N. {van Rijn} and Bernd Bischl and Luis Torgo},
  title        = {OpenML: networked science in machine learning},
  year         = {2013},
  howpublished = {\url{https://www.openml.org/d/189}},
}

@inproceedings{He2019Bag,
  author    = {Tong He and Zhi Zhang and Hang Zhang and Zhongyue Zhang and Junyuan Xie and Mu Li},
  title     = {Bag of Tricks for Image Classification with Convolutional Neural Networks},
  booktitle = {Proceedings of the IEEE/CVF Conference on Computer Vision and Pattern Recognition (CVPR)},
  year      = {2019},
  pages     = {558--567},
  month     = {June},
  address   = {Long Beach, CA, USA},
  doi       = {10.1109/CVPR.2019.00066},
  eprint    = {arXiv:1812.01187},
}

@inproceedings{gong2022confidence,
  title={Confidence calibration for intent detection via hyperspherical space and rebalanced accuracy-uncertainty loss},
  author={Gong, Yantao and Liu, Cao and Yang, Fan and Cai, Xunliang and Wan, Guanglu and Chen, Jiansong and Zhang, Weipeng and Wang, Houfeng},
  booktitle={Proceedings of the AAAI Conference on Artificial Intelligence},
  volume={36},
  number={10},
  pages={10690--10698},
  year={2022}
}

@article{kotelevskii2022nonparametric,
  title={Nonparametric uncertainty quantification for single deterministic neural network},
  author={Kotelevskii, Nikita and Artemenkov, Aleksandr and Fedyanin, Kirill and Noskov, Fedor and Fishkov, Alexander and Shelmanov, Artem and Vazhentsev, Artem and Petiushko, Aleksandr and Panov, Maxim},
  journal={Advances in Neural Information Processing Systems},
  volume={35},
  pages={36308--36323},
  year={2022}
}

@inproceedings{vazhentsev2023hybrid,
  title={Hybrid uncertainty quantification for selective text classification in ambiguous tasks},
  author={Vazhentsev, Artem and Kuzmin, Gleb and Tsvigun, Akim and Panchenko, Alexander and Panov, Maxim and Burtsev, Mikhail and Shelmanov, Artem},
  booktitle={Proceedings of the 61st Annual Meeting of the Association for Computational Linguistics (Volume 1: Long Papers)},
  pages={11659--11681},
  year={2023}
}

@inproceedings{devlin2019bert,
  title={Bert: Pre-training of deep bidirectional transformers for language understanding},
  author={Devlin, Jacob and Chang, Ming-Wei and Lee, Kenton and Toutanova, Kristina},
  booktitle={Proceedings of the 2019 conference of the North American chapter of the association for computational linguistics: human language technologies, volume 1 (long and short papers)},
  pages={4171--4186},
  year={2019}
}

@inproceedings{zhang2015character,
  title={Character-level Convolutional Networks for Text Classification},
  author={Zhang, Xiang and Zhao, Junbo and LeCun, Yann},
  booktitle={Advances in Neural Information Processing Systems},
  year={2015}
}

@inproceedings{maas2011learning,
  title={Learning Word Vectors for Sentiment Analysis},
  author={Maas, Andrew L and Daly, Raymond E and Pham, Peter T and Huang, Dan and Ng, Andrew Y and Potts, Christopher},
  booktitle={Proceedings of the 49th Annual Meeting of the Association for Computational Linguistics},
  year={2011}
}

@inproceedings{saravia2018carer,
  title={CARER: Contextualized Affect Representations for Emotion Recognition},
  author={Saravia, E and Liu, H and Huang, Y and Wu, J and Oche, A},
  booktitle={Proceedings of the 2018 Conference of the North American Chapter of the Association for Computational Linguistics},
  year={2018}
}

@inproceedings{sun2017learning,
  title={Learning structured weight uncertainty in Bayesian neural networks},
  author={Sun, Shengyang and Chen, Changyou and Carin, Lawrence},
  booktitle={Artificial Intelligence and Statistics},
  pages={1283--1292},
  year={2017},
  organization={PMLR}
}

@inproceedings{el2023deep,
  title={Deep gaussian mixture ensembles},
  author={El-Laham, Yousef and Dalmasso, Niccol{\`o} and Fons, Elizabeth and Vyetrenko, Svitlana},
  booktitle={Uncertainty in Artificial Intelligence},
  pages={549--559},
  year={2023},
  organization={PMLR}
}

@book{mardia2009directional,
  title={Directional statistics},
  author={Mardia, Kanti V and Jupp, Peter E},
  year={2009},
  publisher={John Wiley \& Sons}
}

@article{bischl2025openml,
  title={OpenML: Insights from 10 years and more than a thousand papers},
  author={Bischl, Bernd and Casalicchio, Giuseppe and Das, Taniya and Feurer, Matthias and Fischer, Sebastian and Gijsbers, Pieter and Mukherjee, Subhaditya and M{\"u}ller, Andreas C and N{\'e}meth, L{\'a}szl{\'o} and Oala, Luis and others},
  journal={Patterns},
  volume={6},
  number={7},
  year={2025},
  publisher={Elsevier}
}
